\newif\ifhideproofs
\def\BibTeX{{\rm B\kern-.05em{\sc i\kern-.025em b}\kern-.08em
    T\kern-.1667em\lower.7ex\hbox{E}\kern-.125emX}}
\def\<#1>{\boldsymbol{\mathbf{#1}}} 
\providecommand{\norm}[1]{\lVert#1\rVert}
\providecommand{\abs}[1]{\lvert#1\rvert}
\DeclareMathOperator*{\argmax}{arg\,max}
\DeclareMathOperator{\sgn}{sgn} 
\newtheorem{prp}{Proposition}
\newtheorem{thm}{Theorem}
\newtheorem{lem}{Lemma}
\newenvironment{customprp}[1]
  {\innercustomprp}
  {\endinnercustomprp}
\newenvironment{customthm}[1]
  {\innercustomthm}
  {\endinnercustomthm}
\newenvironment{customlem}[1]
  {\innercustomlem}
  {\endinnercustomlem}
\title{Analytical bounds on the local Lipschitz constants of affine-ReLU functions}
\author{%
Trevor Avant\\
University of Washington\\
\texttt{avant@uw.edu} \\
\And
Kristi A. Morgansen\\
University of Washington\\
\texttt{morgansen@aa.washington.edu} \\
}
\begin{document}

\maketitle

\begin{abstract}
In this paper, we determine analytical bounds on the local Lipschitz constants of of affine functions composed with rectified linear units (ReLUs).
Affine-ReLU functions represent a widely used layer in deep neural networks, due to the fact that convolution, fully-connected, and normalization functions are all affine, and are often followed by a ReLU activation function.
Using an analytical approach, we mathematically determine upper bounds on the local Lipschitz constant of an affine-ReLU function, show how these bounds can be combined to determine a bound on an entire network, and discuss how the bounds can be efficiently computed, even for larger layers and networks.
We show several examples by applying our results to AlexNet, as well as several smaller networks based on the MNIST and CIFAR-10 datasets. The results show that our method produces tighter bounds than the standard conservative bound (i.e. the product of the spectral norms of the layers' linear matrices), especially for small perturbations.
\end{abstract}

\section{Introduction}



\subsection{Introduction}

The huge successes of deep neural networks have been accompanied by the unfavorable property of high sensitivity. As a result, for many networks a small perturbation of the input can produce a huge change in the output \cite{Szegedy}.
These sensitivity properties are still not completely theoretically understood, and raise significant concerns when applying neural networks to safety-critical and other applications. This establishes a strong motivation to obtain a better theoretical understanding sensitivity.

One of the main tools in analyzing the sensitivity of neural networks is the Lipschitz constant, which is a measure of how much the output of a function can change with respect to changes in the input. Analytically computing the exact Lipschitz constant of neural networks has so far been unattainable due to the complexity and high-dimensionality of most networks. As feedforward neural networks consist of a composition of functions, a conservative upper bound on the Lipschitz constant can be determined by calculating the product of each individual function's Lipschitz constant \cite{Szegedy}. Unfortunately, this method usually results in a very conservative bound.

Calculating or estimating tighter bounds on Lipschitz constants has recently been approached using optimization-based methods \cite{Scaman,Latorre,Fazlyab,Zou,Jordan}. The downside to these approaches is that they usually can only be applied to small networks, and also often have to be relaxed, which invalidates any guarantee on the bound.

In summary, the current state of Lipschitz analysis of neural networks is that the function-by-function approach yields bounds which are too loose, and holistic approaches are too expensive for larger networks. In this paper, we explore a middle ground between these two approaches by analyzing the composite of two functions, the affine-ReLU function, which represents a common layer used in modern neural networks. This function is simple enough to obtain analytical results, but complex enough to provide tighter bounds than the function-by-function analysis. We can also combine the constants between layers to compute a Lipschitz constant for the entire network. Furthermore, our analytical approach leads us to develop intuition behind the structure of neural network layers, and shows how the different components of the layer (e.g. the linear operator, bias, nominal input, and size of the perturbation) contribute to sensitivity. 

\subsection{Related work}

The high sensitivity of deep neural networks has been noted as early as \cite{Szegedy}. This work conceived the idea of adversarial examples, which have since become a popular area of research \cite{Goodfellow2014}.
One tool that has been used to study the sensitivity of networks is the input-output Jacobian \cite{Novak, Sokolic}, which gives a local estimate of sensitivity but generally provides no guarantees.

Lipschitz constants are also a common tool to study sensitivity. Recently, several studies have explored using optimization-based approaches to compute the Lipschitz constant of neural networks. The work of \cite{Scaman} presents two algorithms, AutoLip and SeqLip, to compute the Lipschitz constant. AutoLip reduces to the standard conservative approach, while SeqLip is an optimization which requires a greedy approximation for larger networks. The work of \cite{Latorre} presents a sparse polynomial optimization (LiPopt) method which relies on the network being sparse. To apply this technique to larger networks, the authors have to first prune the network to increase sparsity. A semidefinite programming technique (LipSDP) is used in \cite{Fazlyab}, but in order to apply the results to larger networks, a relaxation must be used which invalidates the guarantee. Another approach is that of \cite{Zou}, in which linear programming is used to estimate Lipschitz constants. Finally, \cite{Jordan} proposes exactly computing the Lipschitz constant using mixed integer programming, which is very expensive and can only be applied to very small networks.

Other work has considered computing Lipschitz constants in the context of adversarial examples \cite{Peck, Tsuzuku, Weng}. While these works use Lipschitz constants and similar mathematical analysis, their focus is on classification, and it is not clear how or if these techniques can be adapted to provide guaranteed upper Lipschitz bounds for larger networks. Additionally, other work has considered constraining the Lipschitz constant as a means to regularize a network \cite{Gouk,Terjek,Bartlett}. Finally, we note that in this work we study affine-ReLU functions which are commonly used in neural networks, but we are not aware of any work that has directly analyzed these functions except for \cite{Dittmer}.






\subsection{Contributions}

Our main contributions are that develop analytical upper bounds on the local Lipschitz constant of affine-ReLU function. We show how these bounds can be combined to create a bound on an entire feedforward network, and also how we can compute our bounds even for large layers.

\subsection{Notation}

In this paper, we use non-bold lowercase and capital ($a$ and $A$) to denote scalars, bold lowercase ($\<a>$) to denote vectors, and bold uppercase ($\<A>$) to denote matrices. Similarly, we use non-bold to denote scalar-valued functions ($f(\cdot)$) and bold to denote vector-valued functions ($\<f>(\cdot)$). We will use inequalities to compare vectors, and say that $\<a> > \<b>$ holds if all corresponding pairs of elements satisfy the inequality. Additionally, unless otherwise specified, we let $\norm{\cdot}$ denote the 2-norm.

\section{Affine \& ReLU functions}

\subsection{Affine functions} \label{sec:affine_functions}

Affine functions are ubiquitous in deep neural networks, as convolutional, fully-connected, and normalization functions are all affine.
An affine function can be written as $\<f>(\<x>) = \<A> \<x> + \<b>$ where  $\<A> \in \mathbb{R}^{m \times n}$, $\<x> \in \mathbb{R}^n$, and $\<b> \in \mathbb{R}^m$. Note that since we are considering neural networks, without loss of generality we can define $\<x>$ to be a tensor that has been reshaped into a 1D vector. We note that we can redefine the origin of the domain of an affine function to correspond to any point $\<x>_0$. More specifically, if we consider the system $\<A> \<x>{+}\<b>_0$ and nominal input $\<x>_0$, we can redefine $\<x>$ as $\<x>_0{+}\<x>$ in which case the affine function becomes $\<A> \<x>{+}\<A> \<x>_0{+}\<b>_0$. We then redefine the bias as $\<b> \coloneqq \<b>_0{+}\<A> \<x>_0$, which gives us the affine function $\<A> \<x>{+}\<b>$. In this paper, we will use the form $\<A> \<x>{+}\<b>$ to represent an affine function that has been shifted so that the origin is at $\<x>_0$, and $\<x>$ represents a perturbation from $\<x>_0$.

\subsection{Rectified Linear Units (ReLUs)}

The rectified linear unit (ReLU) is widely used as an activation function in deep neural networks. The ReLU is simply the elementwise maximum of an input and zero: $\<relu>(\<y>) = \max(\<0>,\<y>)$.  The ReLU is piecewise linear, and can therefore be represented by a piecewise linear matrix. To define this matrix, we will first define the following function which indicates if a value is non-negative: $\text{ind}(y) = \{1 ~\text{if}~ y<0, ~~ 0 ~\text{if}~ y\leq 0\}$. We will define the elementwise version of this function as $\<ind>: \mathbb{R}^m \rightarrow \{0,1\}^m$.
By defining $\<diag>: \mathbb{R}^m \rightarrow \mathbb{R}^{m \times m}$ as the function which creates a diagonal matrix from a vector, we define the ReLU matrix $\<R>_{\<y>}$ and ReLU function $\<relu>$ as follows
\begin{equation}
\<R>_{\<y>} = \<diag>(\<ind>(\<y>)) , ~~~~~~~~~~~~~~
\<relu>(\<y>) = \<R>_{\<y>} \<y> .
\label{eq:relu_matrix}
\end{equation}
Note that $\<R>_{\<y>}$ is a function of $\<y>$, but to make our notation clear we choose to denote the dependence on $\<y>$ using a subscript rather than parentheses.

Note that ReLUs are naturally related to the geometric concept of orthants, which are a higher dimensional generalization of quadrants in $\mathbb{R}^2$ and octants in $\mathbb{R}^3$. The $m$-dimensional space $\mathbb{R}^m$ has $2^m$ orthants.
Furthermore, the matrix $\<R>$ can be interpreted as an orthogonal projection matrix, which projects $\<y>$ onto a lower-dimensional space of $\mathbb{R}^m$. In $\mathbb{R}^3$ for example, $\<R>$ will represent a projection onto either the origin (when $\<R>=\<0>$), a coordinate axis, a coordinate plane, or all of $\mathbb{R}^3$ (when $\<R>=\<I>$). Each orthant in $\mathbb{R}^m$ corresponds to a linear region of the ReLU function, so since there are $2^m$ orthants there are $2^m$ linear regions.

\subsection{Affine-ReLU Functions} \label{sec:aff_relu_functions}

\begin{figure}[ht]
\begin{center}
\includegraphics[width=.80\textwidth]{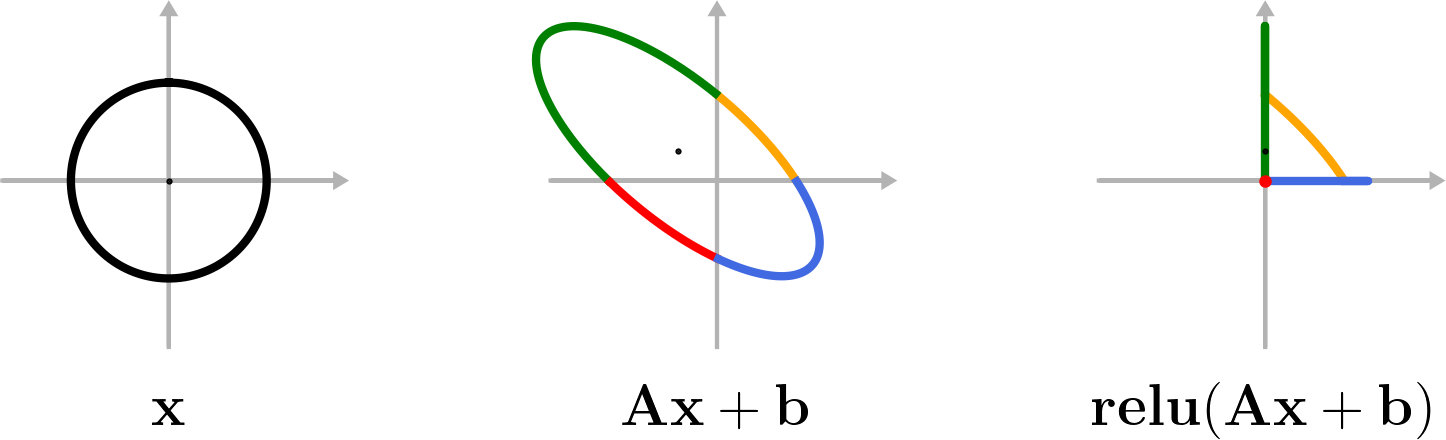}
\end{center}
\caption{The unit ball transformed by affine and ReLU functions in $\mathbb{R}^2$. Different colors represent different orthants after the affine operator. As shown in the rightmost diagram, the ReLU projects the domain onto the non-negative orthant.}
\label{fig:transformation}
\end{figure}

We define an affine-ReLU function as a ReLU composed with an affine function. In a neural network, these represent one layer, e.g. a convolution or fully-connected function with a ReLU activation. Using the notation in \eqref{eq:relu_matrix}, we can write an affine-ReLU function as
\begin{equation}
\<relu>(\<A> \<x> + \<b>) = \<R>_{\<A> \<x> + \<b>} (\<A> \<x> + \<b>) .
\label{eq:aff_relu_basic}
\end{equation}
\begin{figure}[ht]
\centering
\includegraphics[width=.80\textwidth]{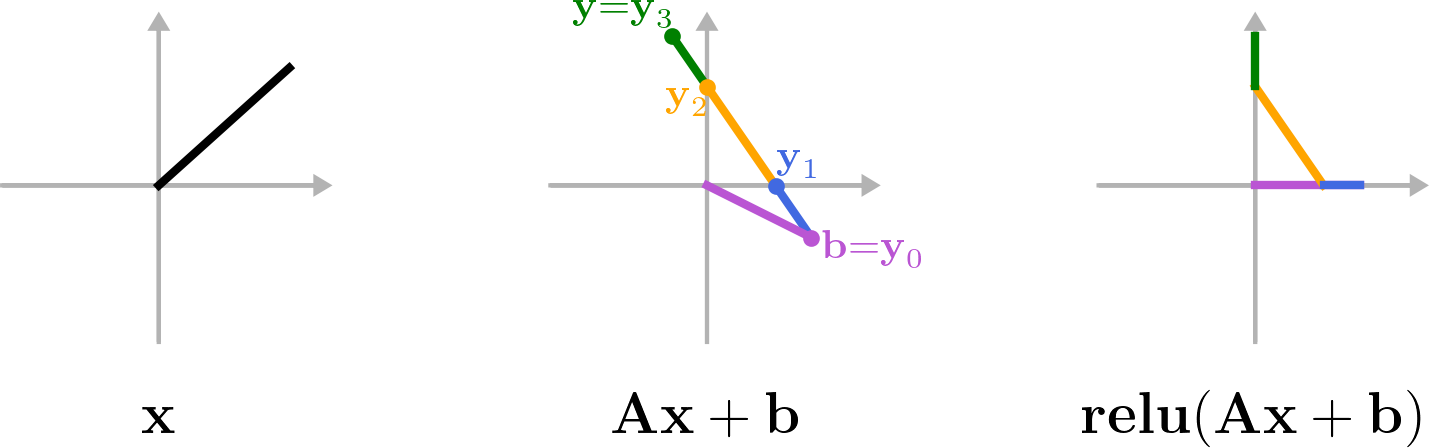}
\caption{A line segment transformed by affine and ReLU functions in $\mathbb{R}^2$.}
\label{fig:line_transformation}
\end{figure}
Using Fig. \ref{fig:line_transformation} as a reference, we note that the vector $\<y> = \<A> \<x> + \<b>$ will lie in several different orthants of $\mathbb{R}^m$, which are the linear regions of the ReLU function. As a result, $\<relu>(\<A> \<x> + \<b>)$ can be represented as a sum across the linear regions (i.e. orthants) of the function. For each $\<x>$, there will be some number $p$ of linear regions, and we define the points at which $\<y>$ transitions between linear regions as $\<y>_i = \alpha_i \<A> \<x> + \<b>$ for $i=1,...,p-1$ and where $0 < \alpha_i < 1$ and $\alpha_i > \alpha_{i-1}$. We also define $\alpha_0 = 0$ and $\alpha_p = 1$ so that $\<y>_0 = \<b>$ and $\<y>_p = \<y>$. The transition vectors $\<y>_i$ can be determined for a given $\<x>$ by determining the value of $\alpha_i$ for which elements of $\<y>_i = \alpha_i \<A> \<x> + \<b>$ equal zero. With these vectors defined, we note that any vectors $\<y>_{i-1}$ and $\<y>_i$ will lie in or at the boundary of the same orthant, and we define $\<R>_i$ as the ReLU matrix corresponding to that orthant. Therefore, we can write the net change of the affine-ReLU function across the orthant adjacent to $\<y>_{i-1}$ and $\<y>_i$ as 
\begin{align}
\<R>_i(\<y>_i - \<y>_{i-1}) = \<R>_i(\alpha_i \<A> \<x> + \<b> - (\alpha_{i-1} \<A> \<x> + \<b>)) = (\alpha_i - \alpha_{i-1}) \<R>_i \<A> \<x> .
\end{align}
Next, we define $\Delta \alpha_i \coloneqq \alpha_i - \alpha_{i-1}$ and note that $\sum_{i=1}^p \Delta \alpha_i = 1$. Noting that the change from the $\<0>$ to $\<b>$ segment of $\<y>$ is $\<R>_{\<b>} (\<b> - \<0>) = \<R>_{\<b>} \<b>$ (see Fig. \ref{fig:line_transformation}), we can write the affine-ReLU function as
\begin{align}
\<relu>(\<A> \<x> + \<b>)
&= \<R>_{\<b>} \<b> + \sum_{i=1}^p \Delta \alpha_i \<R>_i \<A> \<x> .
\label{eq:aff_relu_as_sum}
\end{align}

\section{Lipschitz Constants}

\subsection{Local Lipschitz Constant}

We will analyze the sensitivity of the affine-ReLU functions using a Lipschitz constant, which measures how much the output of a function can change with respect to changes in the input. The Lipschitz constant of a function $\<f>: \mathbb{R}^n \rightarrow \mathbb{R}^m$ is $L = \sup_{\<x>_0 \neq \<x>_1} \norm{\<f>(\<x>_1) - \<f>(\<x>_0)}/\norm{\<x>_1 - \<x>_0}$.
Our goal is to analyze the sensitivity of the affine-ReLU function by considering a nominal input and perturbation. As mentioned in Section \ref{sec:affine_functions}, we consider the affine function $\<A> \<x> + \<b>$ to be shifted so that the origin $\<x> = \<0>$ corresponds to the nominal input $\<x>_0$, and $\<x>$ corresponds to a perturbation. We define the local Lipschitz constant as a modified version of the standard Lipschitz constant:
\begin{align}
L( \<x>_0, \mathcal{X} ) &\coloneqq \max_{\<x> \in \mathcal{X}} \frac{\norm{\<f>(\<x>) - \<f>(\<0>)}}{\norm{\<x>}} .
\label{eq:local_lipschitz_constant}
\end{align}
The set $\mathcal{X} \subseteq \mathbb{R}^n$ represents the set of all permissible perturbations. In this paper, we will be most interested in the case that $\mathcal{X}$ is the Euclidean ball ($\mathcal{X} = \{ \<x> ~~ | ~~ \norm{\<x>} \leq \epsilon \}$) or the positive part of the Euclidean ball ($\mathcal{X} = \{ \<x> ~~ | ~~ \norm{\<x>} \leq \epsilon, ~~ \<x> \geq \<0> \}$). See Appendix \ref{sec:appendix_bounding_region} for more information. As $\mathcal{X}$ denotes the domain of affine function (and affine-ReLU function), we will define the range of the affine function similarly as
\begin{align}
\mathcal{Y} = \{ \<A> \<x> + \<b> ~~ | ~~ \<x> \in \mathcal{X} \} .
\end{align}

Applying the local Lipschitz constant to the affine-ReLU function we have
\begin{align}
L \left( \<x>_0, \mathcal{X} \right)
&= \max_{\<x> \in \mathcal{X}} \frac{\norm{\<relu>(\<A> \<x> + \<b>) - \<relu>(\<b>)}}{\norm{\<x>}} .
\label{eq:aff_relu_lipschitz_constant}
\end{align}
Determining the Lipschitz constant above is a difficult problem due to the piecewise nature of the ReLU. We are not aware of a way to do this computation for high dimensional spaces, which prohibits us from exactly computing the Lipschitz constant. Instead, we will try to come up with a conservative bound. In this paper, we will present several bounds on the Lipschitz constant. The following lemma will serve as a starting point for several of our bounds.
\begin{lem} \label{lem:basic_bound}
Consider the affine function $\<A> \<x> + \<b>$, its domain $\mathcal{X}$, and the piecewise representation of the affine-ReLU function in \eqref{eq:aff_relu_as_sum}. We have the following upper bound on the affine-ReLU function's local Lipschitz constant: $L( \<x>_0, \mathcal{X} ) \leq \max_{\<x> \in \mathcal{X}} \sum_{i=1}^p \Delta \alpha_i \norm{\<R>_i \<A>}$.
\end{lem}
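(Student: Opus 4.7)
The plan is to start from the piecewise representation in \eqref{eq:aff_relu_as_sum}, which already does the heavy lifting of expressing the affine-ReLU function in a form that cleanly separates the nominal response $\<R>_{\<b>}\<b>$ from the perturbation-dependent sum. Substituting $\<x> = \<0>$ into \eqref{eq:aff_relu_as_sum} gives $\<relu>(\<b>) = \<R>_{\<b>} \<b>$, so subtracting this from the full expression collapses the difference to just $\<relu>(\<A>\<x> + \<b>) - \<relu>(\<b>) = \sum_{i=1}^p \Delta \alpha_i \<R>_i \<A> \<x>$.

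From here, I would plug this difference into the definition of the local Lipschitz constant \eqref{eq:aff_relu_lipschitz_constant} to rewrite
\[
L(\<x>_0, \mathcal{X}) = \max_{\<x> \in \mathcal{X}} \frac{\norm{\sum_{i=1}^p \Delta \alpha_i \<R>_i \<A> \<x>}}{\norm{\<x>}}.
\]
Then the proof reduces to two standard inequalities applied inside the max. First, since $\Delta\alpha_i \geq 0$ (recall $\alpha_i > \alpha_{i-1}$), the triangle inequality gives $\norm{\sum_i \Delta\alpha_i \<R>_i \<A> \<x>} \leq \sum_i \Delta\alpha_i \norm{\<R>_i \<A> \<x>}$. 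Second, by the submultiplicative property of the operator 2-norm, $\norm{\<R>_i \<A> \<x>} \leq \norm{\<R>_i \<A>}\,\norm{\<x>}$, so the $\norm{\<x>}$ in the denominator cancels termwise.

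Putting these together yields
\[
\frac{\norm{\sum_{i=1}^p \Delta \alpha_i \<R>_i \<A> \<x>}}{\norm{\<x>}} \leq \sum_{i=1}^p \Delta \alpha_i \norm{\<R>_i \<A>},
\]
and taking the max over $\<x> \in \mathcal{X}$ on both sides preserves the inequality, producing the claimed bound. I do not expect a real obstacle here: the lemma is essentially a triangle-inequality plus operator-norm argument applied to the identity \eqref{eq:aff_relu_as_sum}. The only subtlety to flag is that the matrices $\<R>_i$ and the partition $\{\Delta\alpha_i\}$ themselves depend on $\<x>$ (since they are determined by which orthants the segment from $\<b>$ to $\<A>\<x>+\<b>$ passes through), which is precisely why the $\max_{\<x> \in \mathcal{X}}$ remains on the right-hand side of the bound rather than being resolved to a single closed-form quantity.
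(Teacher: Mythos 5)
Your proposal is correct and follows essentially the same route as the paper's proof: substitute the piecewise representation \eqref{eq:aff_relu_as_sum} into \eqref{eq:aff_relu_lipschitz_constant}, cancel the $\<R>_{\<b>}\<b>$ terms, then apply the triangle inequality and the operator-norm bound $\norm{\<R>_i \<A> \<x>} \leq \norm{\<R>_i \<A>}\norm{\<x>}$ before taking the maximum. Your closing remark about the $\<x>$-dependence of the $\<R>_i$ and $\Delta\alpha_i$ is a worthwhile observation that the paper leaves implicit.
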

\ifhideproofs
The proof is shown in Appendix \ref{sec:appendix_proofs}.
\else
\begin{proof}
We can start with \eqref{eq:aff_relu_lipschitz_constant} and plug in \eqref{eq:aff_relu_as_sum}:
\begin{align}
L \left( \mathcal{X}, \<x>_0 \right)
&= \max_{\<x> \in \mathcal{X}} \frac{\norm{\<relu>(\<A> \<x> + \<b>) - \<relu>(\<b>)}}{\norm{\<x>}} \\
&= \max_{\<x> \in \mathcal{X}} \frac{\norm{( \<R>_{\<b>} \<b> + \sum_{i=1}^p \Delta \alpha_i \<R>_i \<A> \<x> ) - \<R>_{\<b>} \<b>}}{\norm{\<x>}} \\
&= \max_{\<x> \in \mathcal{X}} \frac{\norm{\sum_{i=1}^p \Delta \alpha_i \<R>_i \<A> \<x>}}{\norm{\<x>}} \\
&\leq \max_{\<x> \in \mathcal{X}} \frac{\sum_{i=1}^p \Delta \alpha_i \norm{\<R>_i \<A> \<x>}}{\norm{\<x>}} \\
&\leq \max_{\<x> \in \mathcal{X}} \frac{\sum_{i=1}^p \Delta \alpha_i \norm{\<R>_i \<A>} \norm{\<x>}}{\norm{\<x>}} \\
&= \max_{\<x> \in \mathcal{X}} \sum_{i=1}^p \Delta \alpha_i \norm{\<R>_i \<A>} .
\end{align}
\end{proof}
\fi

\subsection{Naive and intractable upper bounds}


We will now approach the task of deriving an analytical upper bound on the local Lipschitz constant of the affine-ReLU function. We start by presenting a standard naive bound.
\begin{prp} \label{prp:naive_upper_bound}
Consider the affine function $\<A> \<x> + \<b>$ and its domain $\mathcal{X}$. The spectral norm of $\<A>$ is an upper bound on the affine-ReLU function's local Lipschitz constant: $L( \<x>_0, \mathcal{X} ) \leq \norm{\<A>}$.
\end{prp}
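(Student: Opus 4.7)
The plan is to derive this bound as a direct consequence of Lemma \ref{lem:basic_bound}, which already reduces the task to bounding $\max_{\<x> \in \mathcal{X}} \sum_{i=1}^p \Delta \alpha_i \norm{\<R>_i \<A>}$. The key observation is that each ReLU matrix $\<R>_i$ is diagonal with entries in $\{0,1\}$, and therefore (as noted in the paragraph following equation \eqref{eq:relu_matrix}) acts as an orthogonal projection on $\mathbb{R}^m$. This implies $\norm{\<R>_i} \leq 1$, so by submultiplicativity of the spectral norm we get $\norm{\<R>_i \<A>} \leq \norm{\<R>_i} \norm{\<A>} \leq \norm{\<A>}$ for every $i$.

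Given that pointwise bound, the rest is essentially accounting. Since the increments $\Delta\alpha_i$ are non-negative and satisfy $\sum_{i=1}^p \Delta \alpha_i = 1$ (as established right before equation \eqref{eq:aff_relu_as_sum}), we can pull $\norm{\<A>}$ out of the sum:
\begin{equation*}
\sum_{i=1}^p \Delta \alpha_i \norm{\<R>_i \<A>} \;\leq\; \sum_{i=1}^p \Delta \alpha_i \norm{\<A>} \;=\; \norm{\<A>}.
\end{equation*}
This bound holds for every admissible $\<x>$, so the maximum over $\<x> \in \mathcal{X}$ is also bounded by $\norm{\<A>}$, and combining with Lemma \ref{lem:basic_bound} yields $L(\<x>_0, \mathcal{X}) \leq \norm{\<A>}$.

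There is really no hard step here; the mild subtlety worth stating explicitly is the justification that $\norm{\<R>_i} \leq 1$, which the paper already motivates by calling $\<R>$ an orthogonal projection matrix. As a sanity check, one could alternatively prove this proposition in one line by invoking the 1-Lipschitz property of $\<relu>(\cdot)$ (coordinatewise $\max(0, \cdot)$ is non-expansive), giving $\norm{\<relu>(\<A>\<x>+\<b>) - \<relu>(\<b>)} \leq \norm{\<A>\<x>} \leq \norm{\<A>} \norm{\<x>}$; but routing the argument through Lemma \ref{lem:basic_bound} is more consistent with the framework the authors have set up and foreshadows how subsequent, tighter bounds will refine the same sum.
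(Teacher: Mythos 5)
Your proposal is correct and follows essentially the same route as the paper's proof: both start from Lemma \ref{lem:basic_bound}, establish $\norm{\<R>_i \<A>} \leq \norm{\<A>}$ from the fact that $\<R>_i$ is a 0--1 diagonal matrix (the paper phrases this as the nonzero entries of $\<R>_i \<A> \<v>$ being a subset of those of $\<A>\<v>$, while you invoke $\norm{\<R>_i}\leq 1$ and submultiplicativity, which is an equally valid one-line justification), and then use $\sum_{i=1}^p \Delta\alpha_i = 1$ to conclude. No gaps.
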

\ifhideproofs
The proof is shown in Appendix \ref{sec:appendix_proofs}.
\else
\begin{proof}
Consider the inequality from Lemma \ref{lem:basic_bound}. Note that the ReLU matrix $\<R>_i$ is a diagonal matrix of zeros and ones, so for any $\<v> \in \mathbb{R}^n$, the non-zero elements of $\<R>_i \<A> \<v>$ will be a subset of the non-zero elements of $\<A> \<v>$. Therefore, $\<A> \geq \norm{\<R>_i \<A>}$ for all $\<R>_i$ and $\<x> \in \mathcal{X}$, and since $\sum_{i=1}^p \Delta \alpha_i = 1$, we have $L( \<x>_0, \mathcal{X} ) \leq \norm{\<A>}$.
\end{proof}
\fi
This is a standard conservative bound that is often used in determining the Lipschitz constants of a full neural network. Next, we will attempt to create a tighter bound. Consider the term $\norm{\<R>_i \<A>}$ in the inequality in Lemma \ref{lem:basic_bound}. The ReLU matrices $\<R>_i$ are those that correspond to the vectors $\<y> \in \mathcal{Y}$. So if we can determine all possible ReLU matrices for the vectors in $\mathcal{Y}$, then we can determine a tighter bound on the Lipschitz constant. We start by defining the matrix $\<R>_{max}$ as
\begin{equation}
\<R>_{max} \coloneqq \{ \<R>_{\<y>} ~~ | ~~ \norm{\<R>_{\<y>} \<A>} \geq \norm{\<R>_{\<w>} \<A>}, ~~ \<y> \in \mathcal{Y}, ~~ \forall \<w> \in \mathcal{Y} \} . 
\label{eq:Rmax}
\end{equation}

\begin{prp} \label{prp:max_upper_bound}
Consider the affine function $\<A> \<x> + \<b>$, its domain $\mathcal{X}$, and matrix $\<R>_{max}$ defined in \eqref{eq:Rmax}. The spectral norm of $\<R>_{max} \<A>$ is an upper bound on the affine-ReLU function's local Lipschitz constant: $L( \<x>_0, \mathcal{X} ) \leq \norm{\<R>_{max} \<A>}$.
\end{prp}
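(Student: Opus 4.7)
The plan is to combine Lemma~\ref{lem:basic_bound} with the definition~\eqref{eq:Rmax} of $\<R>_{max}$. Specifically, if every matrix $\<R>_i$ appearing in the piecewise decomposition is itself the ReLU matrix associated with some vector in $\mathcal{Y}$, then by~\eqref{eq:Rmax} each term $\norm{\<R>_i \<A>}$ is bounded by $\norm{\<R>_{max} \<A>}$. Pulling this uniform bound outside the sum in Lemma~\ref{lem:basic_bound} and using $\sum_{i=1}^p \Delta \alpha_i = 1$ then immediately yields $L(\<x>_0, \mathcal{X}) \leq \norm{\<R>_{max} \<A>}$.

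The one substantive step is therefore to verify that every $\<R>_i$ is of the form $\<R>_{\<y>}$ for some $\<y> \in \mathcal{Y}$. Fix $\<x> \in \mathcal{X}$ and an index $i \in \{1,\dots,p\}$. By construction, the segment $\alpha \<A> \<x> + \<b>$ for $\alpha \in [\alpha_{i-1}, \alpha_i]$ lies in the closure of a single orthant, and for any strictly interior parameter $\alpha^\star \in (\alpha_{i-1}, \alpha_i)$ the vector $\<y>^\star = \alpha^\star \<A> \<x> + \<b>$ lies in that orthant's interior, so $\<R>_{\<y>^\star} = \<R>_i$. Since the permissible sets $\mathcal{X}$ we consider (the Euclidean ball and its non-negative part) are star-shaped about the origin, $\alpha^\star \<x> \in \mathcal{X}$, and hence $\<y>^\star \in \mathcal{Y}$, as required.

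The main obstacle is precisely the orthant-identification step above: the transition vectors $\<y>_i$ themselves sit on orthant boundaries, so picking a strictly interior parameter $\alpha^\star$ is essential in order for $\<R>_{\<y>^\star}$ to unambiguously equal $\<R>_i$. A secondary, milder concern is well-definedness of $\<R>_{max}$ as an attained maximizer, which follows because the set of ReLU matrices is finite (at most $2^m$), so the quantity $\norm{\<R>_{\<y>} \<A>}$ takes only finitely many values over $\<y> \in \mathcal{Y}$ and a maximum exists.
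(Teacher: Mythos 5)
Your proposal is correct and follows essentially the same route as the paper's proof: apply Lemma~\ref{lem:basic_bound}, bound each $\norm{\<R>_i \<A>}$ by $\norm{\<R>_{max} \<A>}$ via the definition~\eqref{eq:Rmax}, and use $\sum_{i=1}^p \Delta\alpha_i = 1$. Your additional care in verifying that each $\<R>_i$ is realized as $\<R>_{\<y>^\star}$ for some $\<y>^\star \in \mathcal{Y}$ (via an interior parameter and star-shapedness of $\mathcal{X}$) fills in a detail the paper states without justification, but does not change the argument.
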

\begin{proof}
Consider the inequality in Lemma \ref{lem:basic_bound}, and note that the ReLU matrices $\<R>_i$ correspond to vectors $\<y> \in \mathcal{Y}$. So, by definition of $\<R>_{max}$, we have $\norm{\<R>_{max} \<A>} \geq \norm{\<R>_i \<A>}$ for all $\<R>_i$ and all $\<x> \in \mathcal{X}$. Since $\sum_{i=1}^p \Delta \alpha_i = 1$, we have $L( \<x>_0, \mathcal{X} ) \leq \norm{\<R>_{max} \<A>}$.
\end{proof}
While this proposition would provide an upper bound on the Lipschitz constant, in practice it requires determining all possible ReLU matrices $\<R>_i$ corresponding to all vectors $\<y> \in \mathcal{Y}$. Since $\mathbb{R}^m$ has $2^m$ orthants, this method would most likely be intractable except for very small $m$ (due to the large number of matrices we would need to compare). As we do not know of a way that avoids computing a large number of spectral norms, this motivates us to look for an even more conservative bound that is more easily computable.

\section{Upper bounding regions}

\begin{figure}[ht]
\centering
\includegraphics[width=.99\textwidth]{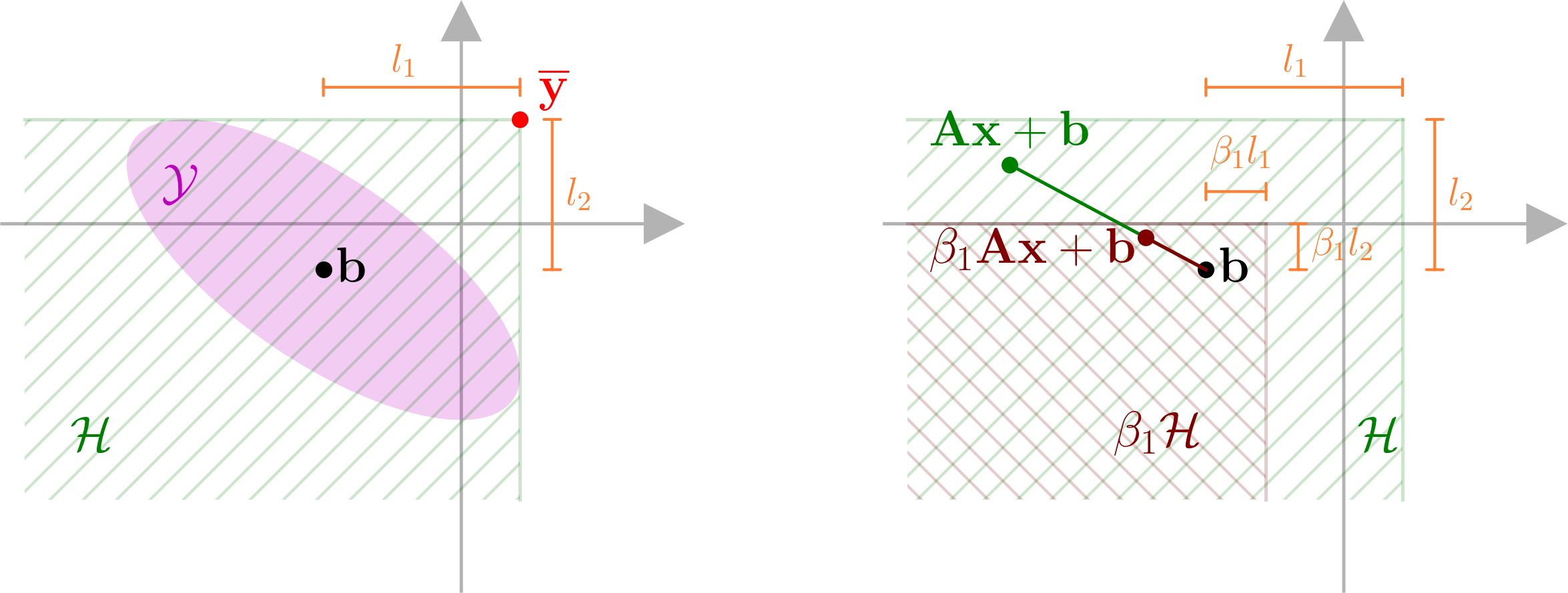}
\caption{\textit{Left}: Diagram of the range $\mathcal{Y}$ of the affine function (in this case, a transformation of when the domain $\mathcal{X}$ is the Eucildean ball), its bounding region $\mathcal{H}$ with lengths $l_i$, and the upper bounding vertex $\overline{\<y>}$.
\textit{Right}: Diagram of a bounding region $\mathcal{H}$ and its scaled bounding region $\beta_1 \mathcal{H}$. Note that there will also be a larger bounding region $\beta_2 \mathcal{H}$ aligned with the vertical axis but it is not shown.}
\label{fig:bounding_region}
\end{figure}

\subsection{Bounding regions}

Our approach in determining a more easily computable bound is based on the idea that we can find a ReLU matrix $\overline{\<R>}$ such that $\norm{\overline{\<R>} \<A>} \geq \norm{\<R>_{\<y>} \<A>}$ for all $\<y> \in \mathcal{Y}$. To find this matrix, we will create a coordinate-axis-aligned bounding region around the set $\mathcal{Y}$ (see the left side of Fig. \ref{fig:bounding_region} for a diagram). We define the upper bounding vertex of this region as $\overline{\<y>}$, its associated ReLU matrix as $\overline{\<R>}$, and the upper bounding region as $\mathcal{H}$:
\begin{align}
\overline{\<y>} \coloneqq \{ \<y> ~~ | ~~ \<y> \geq \<A> \<x> + \<b>, ~~ \forall \<x> \in \mathcal{X} \}, ~~~~~~~
 \overline{\<R>} &\coloneqq \<R>_{\overline{\<y>}}, ~~~~~~~
\mathcal{H} \coloneqq \{ \<y> ~~ | ~~ \<y> \leq \overline{\<y>} \} .
\label{eq:y_bar_R_bar}
\end{align}
We define $\boldsymbol{l} \coloneqq \overline{\<y>} - \<b>$, which represents the distance in each coordinate direction from $\<b>$ to the border of the bounding region (see Fig. \ref{fig:bounding_region}). This region will function as a conservative estimate around $\mathcal{Y}$ in regards to the ReLU operation. For information about how $\overline{\<y>}$ and $\boldsymbol{l}$ can be computed for various domains $\mathcal{X}$, see Appendix \ref{sec:appendix_bounding_region}.

Recalling the definition of $\<R>$ in \eqref{eq:relu_matrix}, we note that $\<R>$ is a diagonal matrix of 0's and 1's. Since the matrix $\overline{\<R>}$ is computed with respect to $\overline{\<y>}$, it is a reflection of the ``most positive'' orthant in $\mathcal{H}$, and will have a 1 anywhere any matrix $\<R>_{\<y>}$ for $\<y> \in \mathcal{H}$ has a 1. The matrix $\overline{\<R>}$ can also be interpreted as the logical disjunction of all matrices $\<R>_{\<y>}$ for $\<y> \in \mathcal{H}$.

\subsection{Nested bounding regions} \label{sec:nested_regions}

We have described the concept of an upper bounding region $\mathcal{H}$, which will lead us to develop an upper bound on the local Lipschitz constant. However, we will be able to develop an even tighter bound by noting that within a bounding region there will be some number of smaller, ``nested'' bounding regions, each with its own matrix $\overline{\<R>}$ (see the right side of Fig. \ref{fig:bounding_region}). We then note that the vector $\<y>$ can be described using a piecewise representation, in which pieces of $\<y>$ closer to $\<b>$ are contained in smaller bounding regions. We consider some number $q$ of the these bounding regions, which we will index by $i=1,...,q$. We define these bounding regions using scalars $0 \leq \beta_i \leq 1$ where $\beta_i > \beta_{i-1}$. For a given $\<x> \in \mathcal{X}$, we define the affine transformation of $\beta_i \<x>$ as $\<y>_i \coloneqq \beta_i \<A> \<x> + \<b>$. Furthermore, we define $\beta_0 = 0$ and $\beta_q = 1$ so that $\<y>_0 = \<b>$ and $\<y>_q = \<y>$. We define the scaled bounding region to be the region that bounds $\<y>_i$ for all $\<x> \in \mathcal{X}$. As in \eqref{eq:y_bar_R_bar}, we define the scaled bounding region as $\beta_i \mathcal{H}$, the bounding vertex as $\overline{\<y>}_i$, and its corresponding ReLU matrix as $\overline{\<R>}_i$:
\begin{align}
\overline{\<y>}_i \coloneqq \{ \<y> ~~ | ~~ \<y> \geq \beta_i \<A> \<x> + \<b>, ~~ \forall \<x> \in \mathcal{X} \}, ~~~~~~~
\overline{\<R>}_i &\coloneqq \<R>_{\overline{\<y>}_i}, ~~~~~~~
\beta_i \mathcal{H} \coloneqq \{ \<y> ~~ | ~~ \<y> \leq \overline{\<y>}_i \} .
\label{eq:beta_y_bar_R_bar}
\end{align}
Note that the distance from $\<b>$ to $\overline{\<y>}$ is $\boldsymbol{l}$ for $\mathcal{H}$, and the distance from $\<b>$ to $\overline{\<y>}_i$ is $\beta_i \boldsymbol{l}$ for $\beta_i \mathcal{H}$. It will be most sensible to define the scalars $\beta_i$ to occur at the points for which the scaled region enters positive space for each coordinate, which are the locations at which $\overline{\<R>}_i$ changes. These values can be found by determining when $\<b> + \beta_i \boldsymbol{l}$ equals zero for each coordinate. Also, we define the difference in $\beta_i$ values as $\Delta \beta_i \coloneqq \beta_i - \beta_{i-1}$ for $i=1,...,q$. Lastly, we define the following lemma which we will use later to create our bound.

\begin{lem}
Consider a bounding region $\mathcal{H}$ and its bounding ReLU matrix $\overline{\<R>}$. For any two points $\<y>_a, \<y>_b \in \mathcal{H}$, the following inequality holds: $\norm{\overline{\<R>} (\<y>_b - \<y>_a)} \geq \norm{\<R>_{\<y>_b} \<y>_b - \<R>_{\<y>_a} \<y>_a}$.
\label{lem:bounding_region_vectors}
\end{lem}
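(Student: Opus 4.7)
The plan is to prove the inequality coordinatewise and then sum, since the 2-norm of a vector is the sum of squares of its coordinates. Recall that $\overline{\<R>} = \<R>_{\overline{\<y>}}$ is a diagonal $\{0,1\}$ matrix whose $k$-th diagonal entry is $1$ iff $\overline{y}_k \geq 0$, and that every $\<y> \in \mathcal{H}$ satisfies $\<y> \leq \overline{\<y>}$ componentwise.

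Fix a coordinate index $k$ and split into two cases. If $\overline{R}_{kk} = 0$, i.e.\ $\overline{y}_k < 0$, then both $y_{a,k} \leq \overline{y}_k < 0$ and $y_{b,k} \leq \overline{y}_k < 0$, so the ReLU zeroes out coordinate $k$ in both $\<R>_{\<y>_a}\<y>_a$ and $\<R>_{\<y>_b}\<y>_b$. The $k$-th component of the right-hand side is therefore $0$, and the $k$-th component of $\overline{\<R>}(\<y>_b - \<y>_a)$ is also $0$ since $\overline{R}_{kk}=0$. The inequality holds trivially in this coordinate.

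If $\overline{R}_{kk} = 1$, then the $k$-th component of $\overline{\<R>}(\<y>_b - \<y>_a)$ equals $y_{b,k} - y_{a,k}$, while the $k$-th component of $\<R>_{\<y>_b}\<y>_b - \<R>_{\<y>_a}\<y>_a$ equals $\max(0,y_{b,k}) - \max(0,y_{a,k})$. The scalar ReLU $s \mapsto \max(0,s)$ is $1$-Lipschitz, so
\[
\bigl|\max(0,y_{b,k}) - \max(0,y_{a,k})\bigr| \;\leq\; |y_{b,k} - y_{a,k}|.
\]
Thus the squared magnitude of the $k$-th component of the right-hand side is dominated by that of the left-hand side.

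Summing these coordinatewise inequalities over all $k$ and taking square roots gives the claimed bound $\norm{\overline{\<R>}(\<y>_b - \<y>_a)} \geq \norm{\<R>_{\<y>_b}\<y>_b - \<R>_{\<y>_a}\<y>_a}$. There is no real obstacle; the only point that requires the hypothesis $\<y>_a,\<y>_b \in \mathcal{H}$ is the first case, which would fail if coordinates with $\overline{y}_k < 0$ were allowed to have $y_{a,k}$ or $y_{b,k}$ positive — the bounding-region containment is exactly what rules this out.
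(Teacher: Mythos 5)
Your proof is correct, and it reaches the same coordinatewise reduction as the paper's proof but handles the per-coordinate step differently. The paper enumerates the five admissible sign patterns of $(R_a, R_b, \overline{R})$ in a table and argues cases 3 and 4 by hand with a sign analysis of $y_b - y_a$. You instead observe that when $\overline{R}_{kk}=1$ the $k$-th component of $\<R>_{\<y>_b}\<y>_b - \<R>_{\<y>_a}\<y>_a$ is exactly $\max(0,y_{b,k}) - \max(0,y_{a,k})$ and invoke the $1$-Lipschitzness of the scalar ReLU, which subsumes the paper's cases 2 through 5 in one line; your remaining case $\overline{R}_{kk}=0$ corresponds to the paper's case 1. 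Your version is shorter and, usefully, makes explicit exactly where the hypothesis $\<y>_a,\<y>_b \in \mathcal{H}$ enters (only the $\overline{R}_{kk}=0$ coordinates, via the componentwise bound $\<y> \leq \overline{\<y>}$, and implicitly in ruling out $R_a=1$ or $R_b=1$ with $\overline{R}=0$), whereas the paper's proof relies on the same containment only implicitly when it asserts that five cases suffice. The paper's tabular enumeration buys a very concrete, checkable record of every configuration, which some readers may prefer; your argument buys brevity and generalizes immediately to any monotone $1$-Lipschitz activation in place of $\max(0,\cdot)$.
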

\ifhideproofs
The proof is shown in Appendix \ref{sec:appendix_proofs}.
\else
\begin{proof}
Since the $\<R>$ matrices are diagonal, we can consider this problem on an element-by-element basis, and show that each element the LHS of the inequality is lower in magnitude than its counterpart in the RHS. For a given $i$, let $y_a$ and $y_b$ denote the $i^{th}$ entry of $\<y>_a$ and $\<y>_b$, respectively, and let $R_a$, $R_b$, and $\overline{R}$ denote the $(i,i)^{th}$ entries of $\<R>_{\<y>_a}$, $\<R>_{\<y>_b}$, and $\overline{\<R>}$, respectively. We can write the $i^{th}$ element of the LHS of the inequality as $\overline{R}(y_b-y_a)$ and the RHS as $R_b y_b - R_a y_a$.

Since $R_{y_a}=1$ or $R_{y_a}=1$ imply $\overline{R}=1$, there are five possible cases we have to consider, which are shown below:
\begin{center}
\begin{tabular}{ c | c c c c c }
& $R_a$ & $R_b$ & $\overline{R}$ & $\abs{\overline{R} (y_b - y_a)}$ & $\abs{R_b y_b - R_a y_a}$ \\ 
\hline
case 1 & 0 & 0 & 0 & 0 & 0 \\
case 2 & 0 & 0 & 1 & $\abs{y_b - y_a}$ & 0 \\
case 3 & 1 & 0 & 1 & $\abs{y_b - y_a}$ & $\abs{y_a}$ \\
case 4 & 0 & 1 & 1 & $\abs{y_b - y_a}$ & $\abs{y_b}$ \\
case 5 & 1 & 1 & 1 & $\abs{y_b - y_a}$ & $\abs{y_b -y_a}$
\end{tabular}
\end{center}
For cases 1, 2, and 5 it is clear that $\abs{\overline{R}(y_b - y_a)} \geq \abs{R_b y_b - R_a y_a}$. For case 3, we note that if $R_a=1$ and $R_b=0$, then $y_a \geq 0$ and $y_b \leq 0$, which means that $y_b - y_a$ is a negative number smaller in magnitude than $y_a$. Similar logic can be applied to case 4, except in this case $y_b - y_a$ is a positive number larger in magnitude than $y_b - y_a$.

We showed that the magnitude of each element of the LHS of the inequality is greater in magnitude than the corresponding element in the RHS, i.e. $\abs{\overline{R} (y_b - y_a)} \geq \abs{R_b y_b - R_a y_a}$. This implies $\norm{\overline{\<R>} (\<y>_b - \<y>_a)} \geq \norm{\<R>_{\<y>_b} \<y>_b - \<R>_{\<y>_a} \<y>_a}$.
\end{proof}
\fi

\section{Upper bounds}

\subsection{Looser and tighter upper bounds}

We are now ready to present the main mathematical results of the paper.
\begin{thm} \label{thm:looser_bounds}
Consider the affine function $\<A> \<x> + \<b>$, its domain $\mathcal{X}$, and its bounding ReLU matrix $\overline{\<R>}$ from \eqref{eq:y_bar_R_bar}. The spectral norm of $\overline{\<R>} \<A>$ is an upper bound on the affine-ReLU function's local Lipschitz constant: $L( \<x>_0, \mathcal{X} ) \leq \norm{\overline{\<R>} \<A>}$.
\end{thm}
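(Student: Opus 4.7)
The plan is to reduce the theorem to Lemma \ref{lem:basic_bound}, which already bounds $L(\<x>_0, \mathcal{X})$ by $\max_{\<x> \in \mathcal{X}} \sum_{i=1}^p \Delta \alpha_i \norm{\<R>_i \<A>}$. Since $\sum_{i=1}^p \Delta \alpha_i = 1$, it suffices to show that every transition ReLU matrix $\<R>_i$ appearing in the piecewise decomposition satisfies $\norm{\<R>_i \<A>} \leq \norm{\overline{\<R>} \<A>}$; the weighted average is then itself bounded by $\norm{\overline{\<R>} \<A>}$.

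The first step toward that inequality is a support-containment claim: the diagonal of each $\<R>_i$ is pointwise dominated by that of $\overline{\<R>}$. Each transition point has the form $\<y>_i = \alpha_i \<A> \<x> + \<b>$ with $\alpha_i \in [0,1]$, and because the admissible sets $\mathcal{X}$ are star-shaped about $\<0>$ (the Euclidean ball or its positive part), $\alpha_i \<x> \in \mathcal{X}$, so $\<y>_i \in \mathcal{Y}$. Since $\overline{\<y>}$ was defined as a coordinatewise upper bound on $\mathcal{Y}$, we also have $\<y>_i \in \mathcal{H}$. By the characterization of $\overline{\<R>}$ stated right after \eqref{eq:y_bar_R_bar} --- namely, $\overline{\<R>}$ is the logical disjunction of all $\<R>_{\<y>}$ for $\<y> \in \mathcal{H}$ --- any diagonal position in which $\<R>_i$ has a $1$ is also a position in which $\overline{\<R>}$ has a $1$.

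The second step promotes this support containment into the desired spectral-norm inequality. Both matrices are diagonal with entries in $\{0,1\}$, so for any unit $\<v> \in \mathbb{R}^n$ the coordinate $(\<R>_i \<A> \<v>)_j$ is either $(\<A> \<v>)_j$ or $0$, and whenever it equals $(\<A> \<v>)_j$ the corresponding entry of $\overline{\<R>} \<A> \<v>$ also equals $(\<A> \<v>)_j$. Summing squared coordinates therefore gives $\norm{\<R>_i \<A> \<v>}^2 \leq \norm{\overline{\<R>} \<A> \<v>}^2$, and taking the supremum over unit $\<v>$ yields $\norm{\<R>_i \<A>} \leq \norm{\overline{\<R>} \<A>}$. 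Substituting into Lemma \ref{lem:basic_bound} and using $\sum_{i=1}^p \Delta \alpha_i = 1$ completes the proof.

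The main subtlety is the first step: making rigorous the assertion, only stated informally in the preceding text, that $\overline{\<R>}$ dominates every $\<R>_{\<y>}$ for $\<y> \in \mathcal{H}$, and verifying that each intermediate $\<y>_i$ truly lies in $\mathcal{H}$ rather than merely in some larger superset. Once that containment is in place, the spectral-norm comparison and the final averaging are routine.
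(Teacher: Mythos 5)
Your proof is correct and takes essentially the same route as the paper's: reduce to Lemma \ref{lem:basic_bound}, show that the support of each $\<R>_i$ is contained in that of $\overline{\<R>}$ so that $\norm{\<R>_i \<A>} \leq \norm{\overline{\<R>} \<A>}$, and conclude by averaging with $\sum_{i=1}^p \Delta \alpha_i = 1$. Your explicit justification that each transition point $\<y>_i = \alpha_i \<A>\<x> + \<b>$ lies in $\mathcal{H}$ (via star-shapedness of $\mathcal{X}$ about $\<0>$) is a detail the paper leaves implicit, but it is the same argument.
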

\begin{proof}
From Lemma \ref{lem:basic_bound} we have $L( \<x>_0, \mathcal{X} ) \leq \max_{\<x> \in \mathcal{X}} \sum_{i=1}^p \Delta \alpha_i \norm{\<R>_i \<A>}$. We note that the matrix $\<R>_i$ corresponds to vectors which are inside the bounding region $\mathcal{H}$. Recalling that the ReLU matrices $\<R>$ are diagonal matrices with 0's and 1's on the diagonal, and $\overline{\<R>}$ will have a 1 anywhere any matrix $\<R>_{\<y>}$ for $\<y> \in \mathcal{H}$ has a 1. Therefore the non-zero elements of $\<R>_i \<w>$ will be a subset of the non-zero elements of $\overline{\<R>} \<w>$ for all $\<y> \in \mathcal{Y}$ and all $\<w> \in \mathbb{R}^m$, which implies $\norm{\overline{\<R>} \<A>} \geq \norm{\<R>_i \<A>}$ for all $i$ and all $\<x> \in \mathcal{X}$. Since $\sum_{i=1}^p \Delta \alpha_i = 1$, we have $L( \<x>_0, \mathcal{X} ) \leq \norm{\overline{\<R>} \<A>}$.
\end{proof}

Computing this bound for a given domain $\mathcal{X}$ will be quick if we can quickly compute $\overline{\<R>}$ and $\norm{\overline{\<R>} \<A>}$. However, we can create an even tighter bound by using the idea of nested regions in Section \ref{sec:nested_regions}.


\begin{thm} \label{thm:tighter_bounds}
Consider the affine function $\<A> \<x> + \<b>$ and its domain $\mathcal{X}$. Consider the nested bounding regions $\beta_i \mathcal{H}$, their scale factors $\Delta \beta_i$ and their bounding ReLU matrices $\overline{\<R>}_i$ as described in Section \ref{sec:nested_regions}. The following is an upper bound on the affine-ReLU function's local Lipschitz constant: $L( \<x>_0, \mathcal{X} ) \leq \sum_{i=1}^q \Delta \beta_i \norm{\overline{\<R>}_i \<A>}$.
\end{thm}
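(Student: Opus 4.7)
The plan is to prove the bound by a telescoping decomposition along the $\beta_i$-scaled versions of $\<x>$, then controlling each telescope term with Lemma \ref{lem:bounding_region_vectors}. Unlike the $\alpha_i$ partition used in Lemma \ref{lem:basic_bound}, the $\beta_i$ are fixed (independent of $\<x>$), so the resulting bound will not require a max over $\<x>$ inside the sum.

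First I would fix $\<x> \in \mathcal{X}$ and define $\<y>_i \coloneqq \beta_i \<A>\<x>+\<b>$ for $i=0,1,\dots,q$, which gives $\<y>_0=\<b>$, $\<y>_q = \<A>\<x>+\<b>$, and $\<y>_i - \<y>_{i-1} = \Delta\beta_i\,\<A>\<x>$. Since $\<relu>(\<y>_i) = \<R>_{\<y>_i}\<y>_i$, I would write the telescoping identity
\begin{equation*}
\<relu>(\<A>\<x>+\<b>) - \<relu>(\<b>) = \sum_{i=1}^{q}\bigl(\<R>_{\<y>_i}\<y>_i - \<R>_{\<y>_{i-1}}\<y>_{i-1}\bigr).
\end{equation*}

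Next I would verify the geometric containment that makes Lemma \ref{lem:bounding_region_vectors} applicable on each term: both $\<y>_{i-1}$ and $\<y>_i$ lie in $\beta_i\mathcal{H}$. By definition of $\overline{\<y>}$, we have $\<A>\<x> \leq \<l>$, and since $\<l>\geq\<0>$ together with $0\leq\beta_{i-1}\leq\beta_i\leq 1$, coordinate-wise scalar manipulation yields $\beta_{i-1}\<A>\<x>\leq\beta_i\<l>$ and $\beta_i\<A>\<x>\leq\beta_i\<l>$, so that $\<y>_{i-1},\<y>_i\leq\<b>+\beta_i\<l>=\overline{\<y>}_i$. Applying Lemma \ref{lem:bounding_region_vectors} with $\<y>_a=\<y>_{i-1}$, $\<y>_b=\<y>_i$, and bounding matrix $\overline{\<R>}_i$ gives
\begin{equation*}
\norm{\<R>_{\<y>_i}\<y>_i - \<R>_{\<y>_{i-1}}\<y>_{i-1}} \leq \norm{\overline{\<R>}_i(\<y>_i-\<y>_{i-1})} = \Delta\beta_i\,\norm{\overline{\<R>}_i\<A>\<x>}.
\end{equation*}

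Finally I would combine the telescope with the triangle inequality and submultiplicativity of the spectral norm:
\begin{equation*}
\norm{\<relu>(\<A>\<x>+\<b>)-\<relu>(\<b>)} \leq \sum_{i=1}^{q}\Delta\beta_i\,\norm{\overline{\<R>}_i\<A>\<x>} \leq \Bigl(\sum_{i=1}^{q}\Delta\beta_i\,\norm{\overline{\<R>}_i\<A>}\Bigr)\norm{\<x>}.
\end{equation*}
Dividing by $\norm{\<x>}$, the right-hand side is independent of $\<x>$, so taking the max over $\mathcal{X}$ yields the claimed inequality. The main obstacle I anticipate is not the telescoping or the final norm manipulations, but the containment step: one has to be careful that $\<y>_{i-1}$ lies in the \emph{larger} region $\beta_i\mathcal{H}$ rather than in its own $\beta_{i-1}\mathcal{H}$, and that this holds coordinate-wise even when $\<A>\<x>$ has negative components (which is why the nonnegativity of $\<l>$ is essential).
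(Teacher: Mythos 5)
Your proof is correct and follows essentially the same route as the paper's: a telescoping decomposition over the $\beta_i$-scaled points, Lemma \ref{lem:bounding_region_vectors} applied to each increment via the containment $\<y>_{i-1},\<y>_i\in\beta_i\mathcal{H}$, and then the triangle inequality and spectral-norm submultiplicativity. Your explicit justification of the containment step (using $\<l>\geq\<0>$ to handle negative components of $\<A>\<x>$) is more careful than the paper's, which simply asserts it, but the argument is otherwise identical.
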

\ifhideproofs
The proof is shown in Appendix \ref{sec:appendix_proofs}.
\else
\begin{proof}
First, define the affine transformation of $\beta_i \<x>$ to be $\<y>_i = \beta_i \<A> \<x> + \<b>$. We can write the function $\<relu>(\<A> \<x> + \<b>)$ as sum of the difference in the affine-ReLU function across the segment each segment, which for segment from $i{-}1$ to $i$ is $\<R>_{\<y>_i} \<y>_i - \<R>_{\<y>_{i-1}} \<y>_{i-1}$. So we can write the total function as
\begin{align}
\<relu>(\<A> \<x> + \<b>)
&= (\<R>_{\<b>} - \<0>) + \sum_{i=1}^q \left( \<R>_i \<y>_i - \<R>_{i-1} \<y>_{i-1} \right) \\
&= \<R>_{\<b>} + \sum_{i=1}^q \left( \<R>_i \<y>_i - \<R>_{i-1} \<y>_{i-1} \right)
\label{eq:aff_relu_as_sum_1}
\end{align}
Using the form in \eqref{eq:aff_relu_lipschitz_constant}, we can now write the local Lipschitz constant as
\begin{align}
L( \<x>_0, \mathcal{X} ) 
&= \max_{\<x> \in \mathcal{X}} \frac{\norm{\<relu>(\<A> \<x> + \<b>) - \<relu>(\<b>) }}{\norm{\<x>}} \\
&= \max_{\<x> \in \mathcal{X}} \frac{\norm{\<R>_{\<b>} \<b> + \sum_{i=1}^q ( \<R>_{\<y>_i} \<y>_i - \<R>_{\<y>_{i-1}} \<y>_{i-1} ) - \<R>_{\<b>} \<b>}}{\norm{\<x>}} \\
&= \max_{\<x> \in \mathcal{X}} \frac{\norm{\sum_{i=1}^q ( \<R>_{\<y>_i} \<y>_i - \<R>_{\<y>_{i-1}} \<y>_{i-1} )}}{\norm{\<x>}} \\
&\leq \max_{\<x> \in \mathcal{X}} \frac{\sum_{i=1}^q \norm{ ( \<R>_{\<y>_i} \<y>_i - \<R>_{\<y>_{i-1}} \<y>_{i-1} )}}{\norm{\<x>}} .
\end{align}

Recalling that $\<y>_i = \beta_i \<A> \<x> + \<b>$, by \eqref{eq:beta_y_bar_R_bar}, we know that $\<y>_{i-1},\<y>_i \in \beta_i \mathcal{H}$. So, by Lemma \ref{lem:bounding_region_vectors} we have
\begin{align}
L( \<x>_0, \mathcal{X} )
&\leq \max_{\<x> \in \mathcal{X}} \frac{\sum_{i=1}^q \norm{\overline{\<R>}_i (\<y>_i - \<y>_{i-1})}}{\norm{\<x>}} \\
&= \max_{\<x> \in \mathcal{X}} \frac{\sum_{i=1}^q \norm{\overline{\<R>}_i \left( \beta_i \<A> \<x>  + \<b> - (\beta_{i-1} \<A> \<x> + \<b>) \right)}}{\norm{\<x>}} \\
&= \max_{\<x> \in \mathcal{X}} \frac{\sum_{i=1}^q \Delta \beta_i \norm{\overline{\<R>}_i \<A> \<x>}}{\norm{\<x>}} \\
&\leq \frac{\sum_{i=1}^q \Delta \beta_i \norm{\overline{\<R>}_i \<A>} \norm{\<x>}}{\norm{\<x>}} \\
&= \sum_{i=1}^q \Delta \beta_i \norm{\overline{\<R>}_i \<A>} .
\end{align}
\end{proof}
\fi
This is the last bound we will derive. To summarize the bounds from Proposition \ref{prp:naive_upper_bound} and Theorems \ref{thm:looser_bounds} \& \ref{thm:tighter_bounds}, we have
\begin{equation}
\norm{\<A>} \geq \norm{\overline{\<R>} \<A>} \geq \sum_{i=1}^q \Delta \beta_i \norm{\overline{\<R>}_i \<A>} \geq L( \<x>_0, \mathcal{X} ) .
\end{equation}
Note that the bound in Proposition \ref{prp:max_upper_bound} may be less than or greater than the bound in Theorem \ref{thm:tighter_bounds} so we cannot include it in the inequality above.

\subsection{Bounds for Multiple Layers} \label{sec:multiple_layers}

So far, our analysis has applied to a single affine-ReLU function, which would represents one layer (e.g. convolution-ReLU) of a network. We now describe how these bounds can be combined for multiple layers.
First, assume that we have a bound $\epsilon$ on the size of the perturbation $\<x>$, i.e. $\norm{\<x>} \leq \epsilon, ~~ \forall \<x> \in \mathcal{X}$. We can rearrange the local Lipschitz constant equation in \eqref{eq:local_lipschitz_constant} by moving the denominator to the LHS and applying the $\epsilon$ bound as follows
\begin{align}
\epsilon L( \<x>_0, \mathcal{X} ) &\geq \norm{\<f>(\<x>) - \<f>(\<0>)}, ~~ \forall \<x> \in \mathcal{X} .
\end{align}
Recall that $\<f>(\<0>)$ represents the nominal input of the next layer, so $\<f>(\<x>) - \<f>(\<0>)$ represents the perturbation with respect to the next layer. Defining this perturbation as $\<z> \coloneqq \<f>(\<x>) - \<f>(\<0>)$, we have
\begin{align}
\epsilon L(\<x>_0, \mathcal{X}) \geq \norm{\<z>} .
\end{align}
This gives us a bound on perturbations of the nominal input of the next layer. We can therefore compute the local Lipschitz bounds in an iterative fashion, by propagating the perturbation bounds through each layer of the network. More specifically, if we start with $\epsilon$, we can compute the Lipschitz constant of the current layer and then determine the bound for the next layer. We can continue this process for subsequent layers. Using these perturbation bounds, we will consider the domains for each layer of the network to be of the form $\mathcal{X} = \{ \<x>  ~~ | ~~ \norm{\<x>} \leq \epsilon \}$ or $\mathcal{X} = \{ \<x>  ~~ | ~~ \norm{\<x>} \leq \epsilon, ~~ \<x> \geq \<0> \}$ when the layer is preceded by a ReLU. Also note that for other types of layers such as max pooling, if we can't compute a local bound, we can use the global bound, which is what we will do in our simulations.


\section{Simulation}

\subsection{Spectral norm computation}

Our results rely on computing the spectral norm $\<R> \<A>$ for various ReLU matrices $\<R>$. The $\<A>$ matrix will correspond to either a convolution or fully-connected function. For larger convolution functions, the $\<A>$ matrices are often too large to define explicitly. The only way we can compute $\norm{\<R> \<A>}$ for larger layers is by using a power iteration method.

To compute the spectral norm of a matrix $\<M>$, we can note that the largest singular value of $\<M>$ is the square root of the largest eigenvalue of $\<M>^T \<M>$. So, we can find the spectral norm of $\<M>$ by applying a power iteration to the operator $\<M>^T \<M>$. In our case, $\<M> = \<R> \<A>$ and $\<M>^T \<M> = \<A>^T \<R>^T \<R> \<A> = \<A>^T \<R> \<A>$ (for various $\<R>$ matrices). We can compute the operations corresponding to the $\<A>$, $\<A>^T$, and $\<R>$ matrices in code using convolution to apply $\<A>$, transposed convolution to apply $\<A>^T$, and zeroing appropriate elements to apply $\<R>$.
In all of our simulations, we used 100 iterations, which we verified to be accurate for smaller systems for which an SVD can be computed for comparison.

\subsection{Simulations}

In our simulations, we compared three different networks: a 7-layer network trained on MNIST, an 8-layer network trained on CIFAR-10, and AlexNet \cite{Krizhevsky} (11-layers, trained on ImageNet). See Appendix \ref{sec:appendix_architectures} for the exact architectures we used. We trained the MNIST and CIFAR-10 networks ourselves while we used the trained version of AlexNet from Pytorch's \texttt{torchvision} package. For all of our simulations, we used nominal input images which achieved good classification. However, we noticed that we obtained similar trends using random images. We compared the upper bounds of Proposition \ref{prp:naive_upper_bound}, Theorem \ref{thm:looser_bounds}, Theorem \ref{thm:tighter_bounds}, as well as naive lower bound based on randomly sampling 10,000 perturbation vectors from an $\epsilon$-sized sphere. Figure \ref{fig:layers} shows the results for different layers of the MNIST network. Figure \ref{fig:full_network} shows the full-network local Lipschitz constants using the method discussed in Section \ref{sec:multiple_layers}. Table \ref{tab:computation_times} shows the computation times.   

\begin{figure}[ht]
\centering
\includegraphics[width=.24\textwidth]{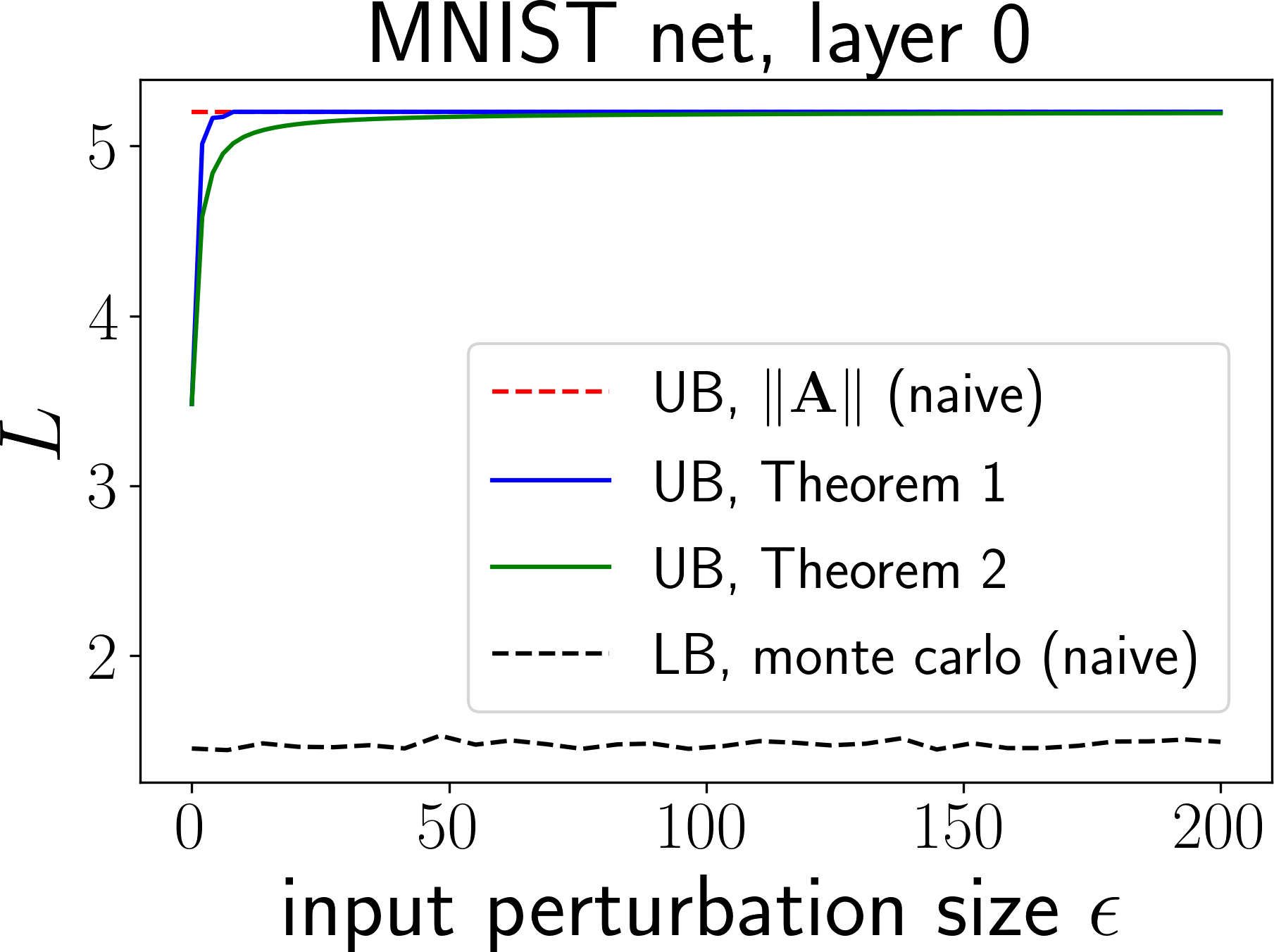}
\includegraphics[width=.24\textwidth]{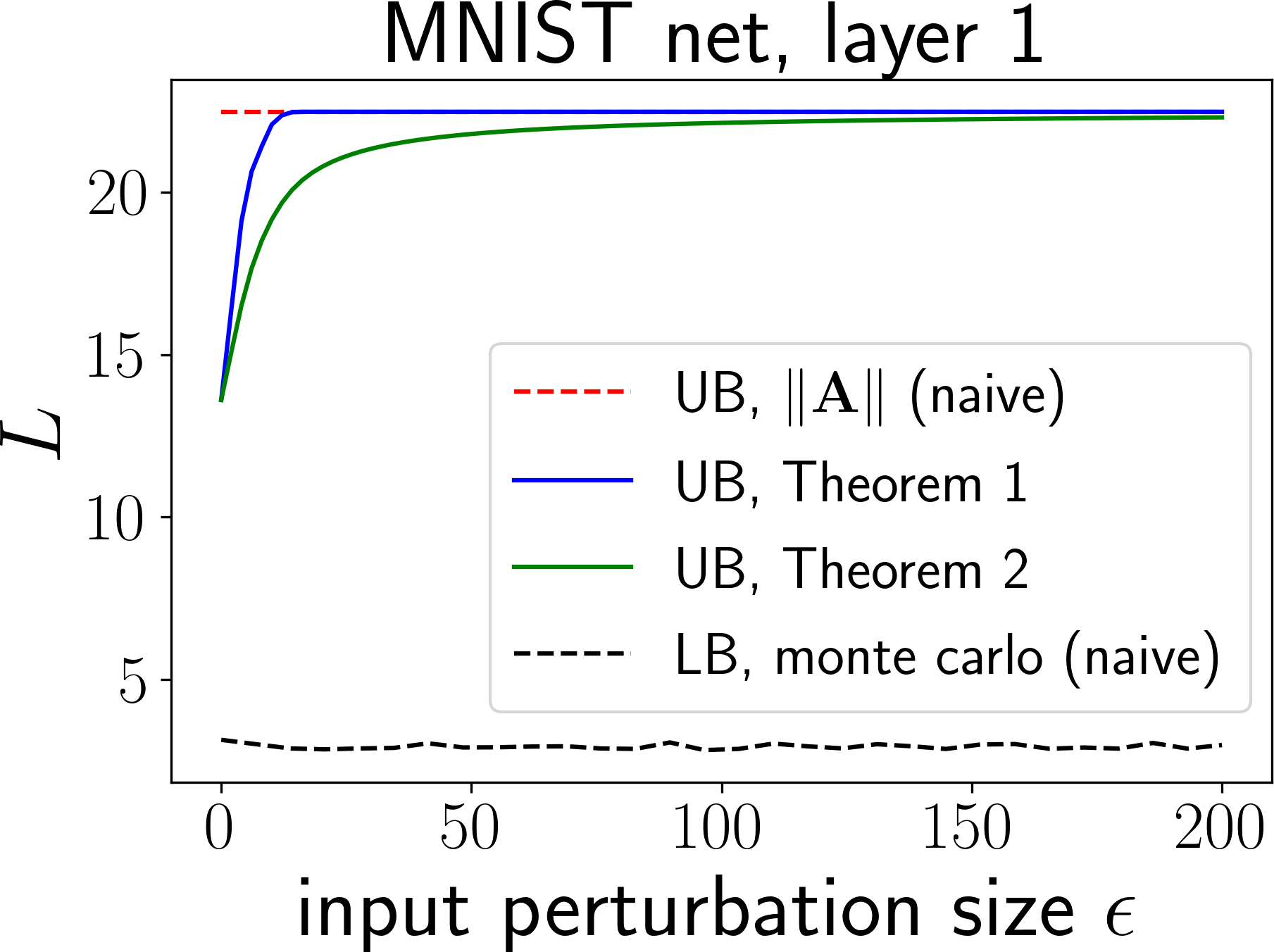}
\includegraphics[width=.24\textwidth]{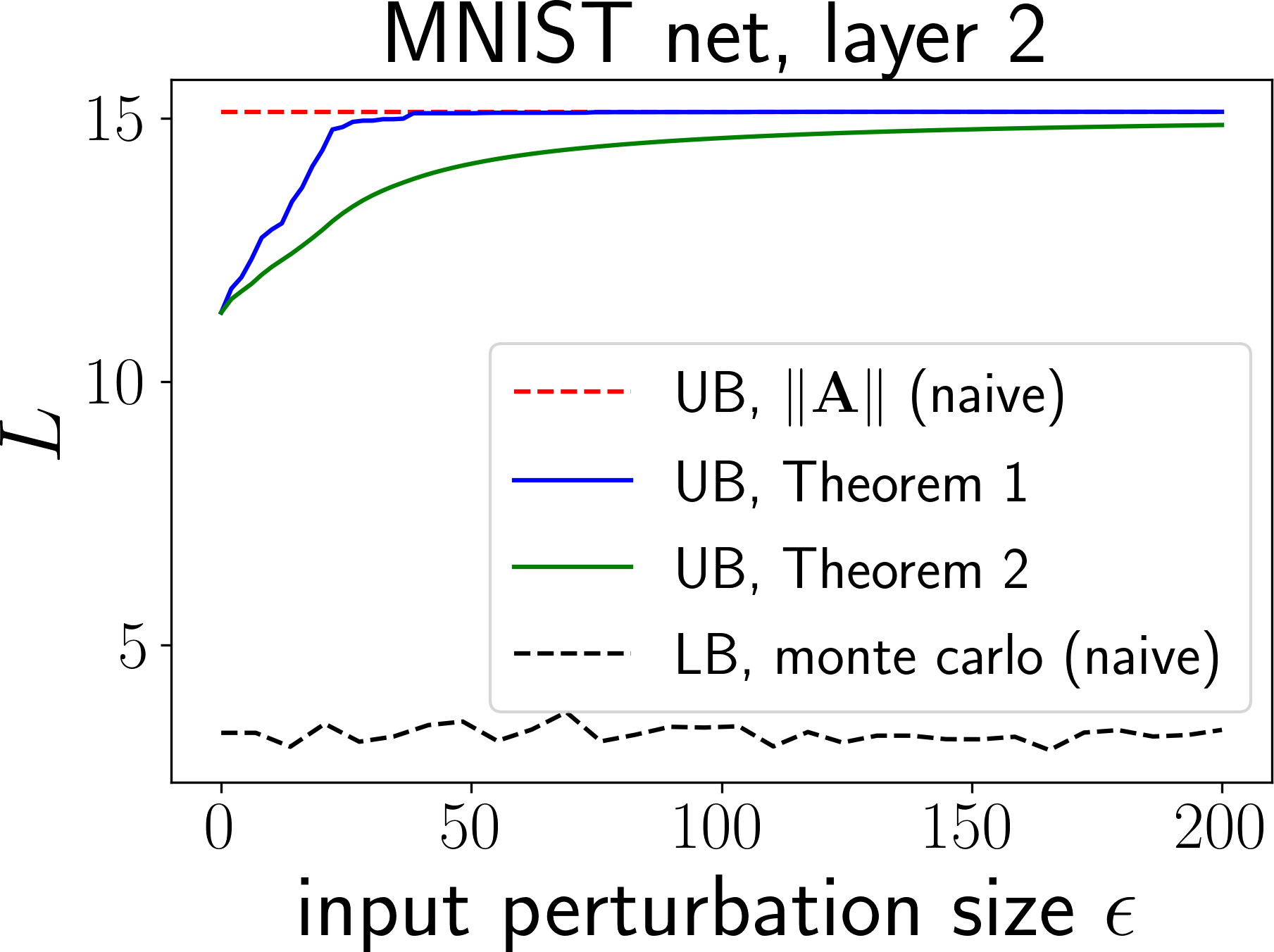}
\includegraphics[width=.24\textwidth]{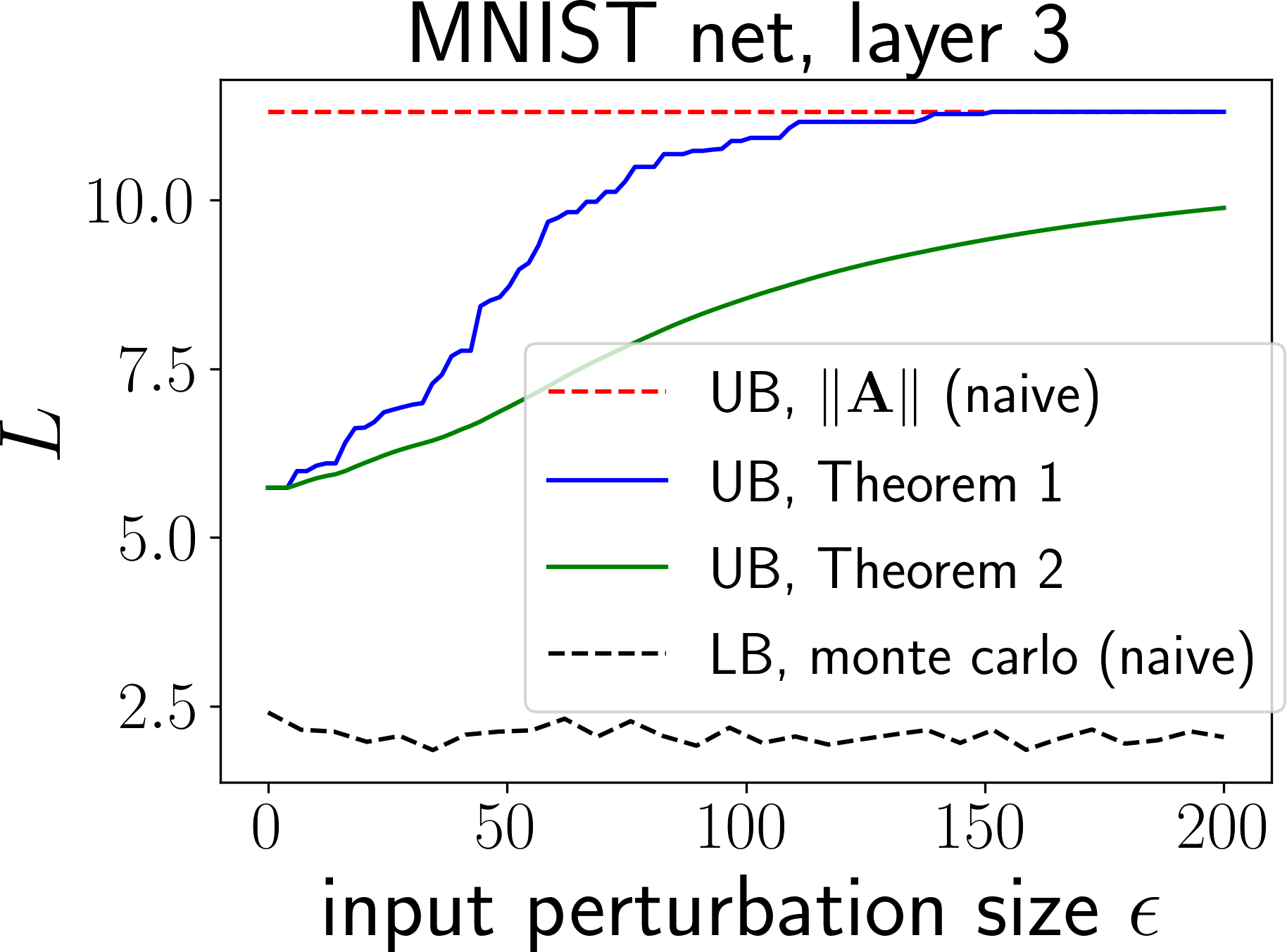}
\caption{Upper bounds (UB) and lower bounds (LB) on the local Lipschitz constants for the 4 affine-ReLU layers (convolution, convolution, fully-connected, fully-connected) of the MNIST net. Note that these results are computed with respect to a particular nominal input image.}
\label{fig:layers}
\end{figure}

\begin{figure}[ht]
\centering
\includegraphics[width=.32\textwidth]{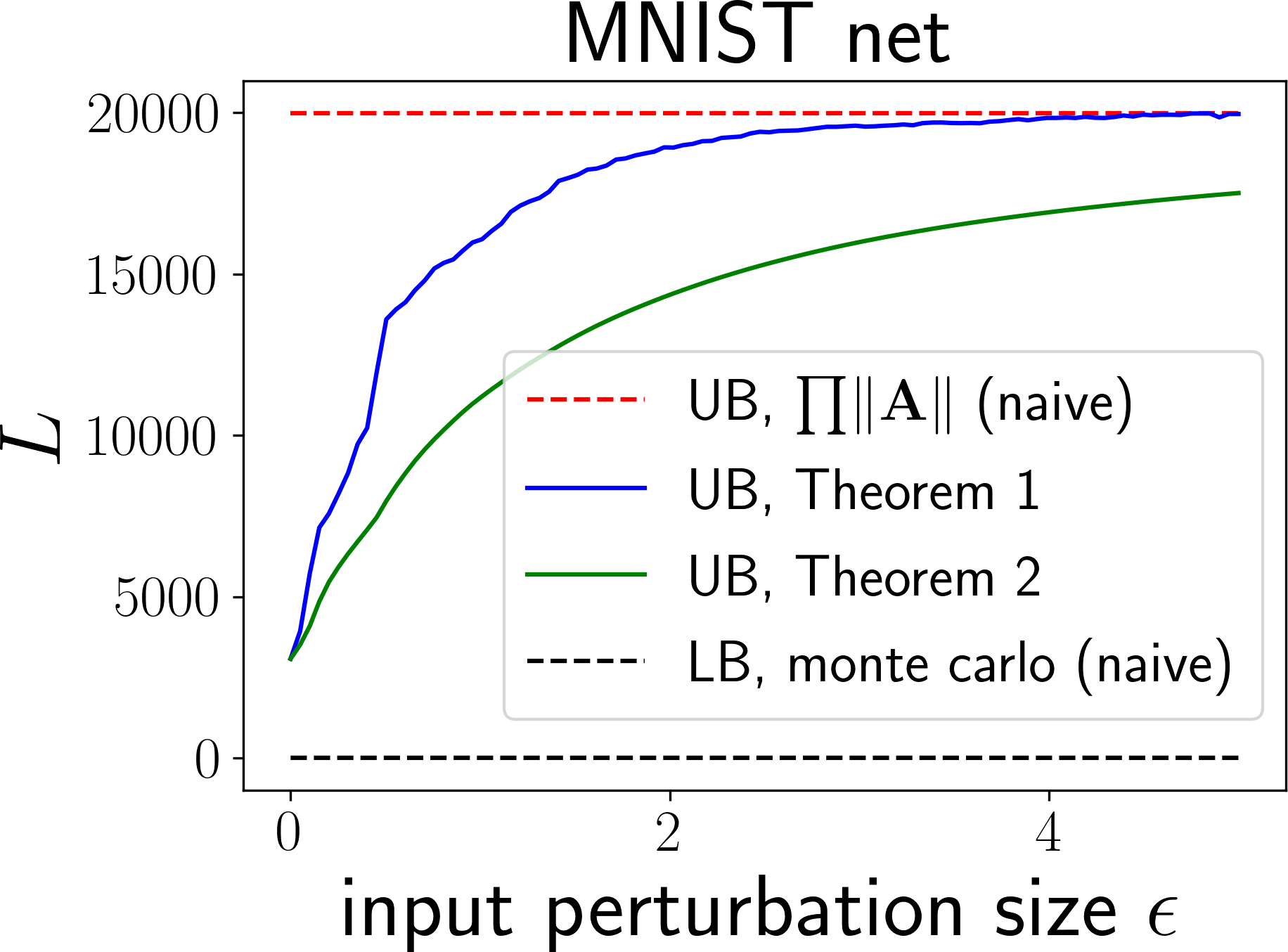}
\includegraphics[width=.32\textwidth]{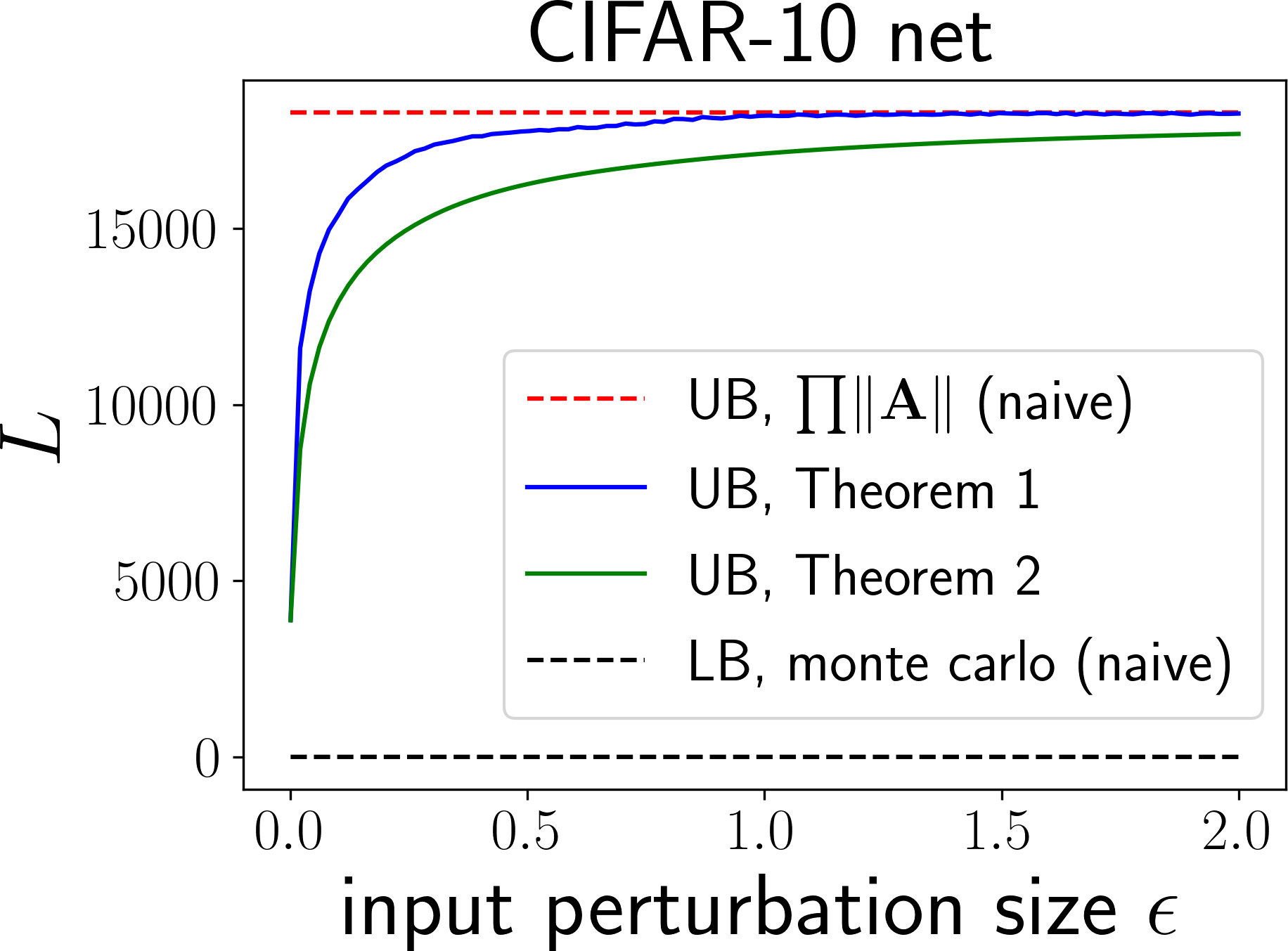}
\includegraphics[width=.32\textwidth]{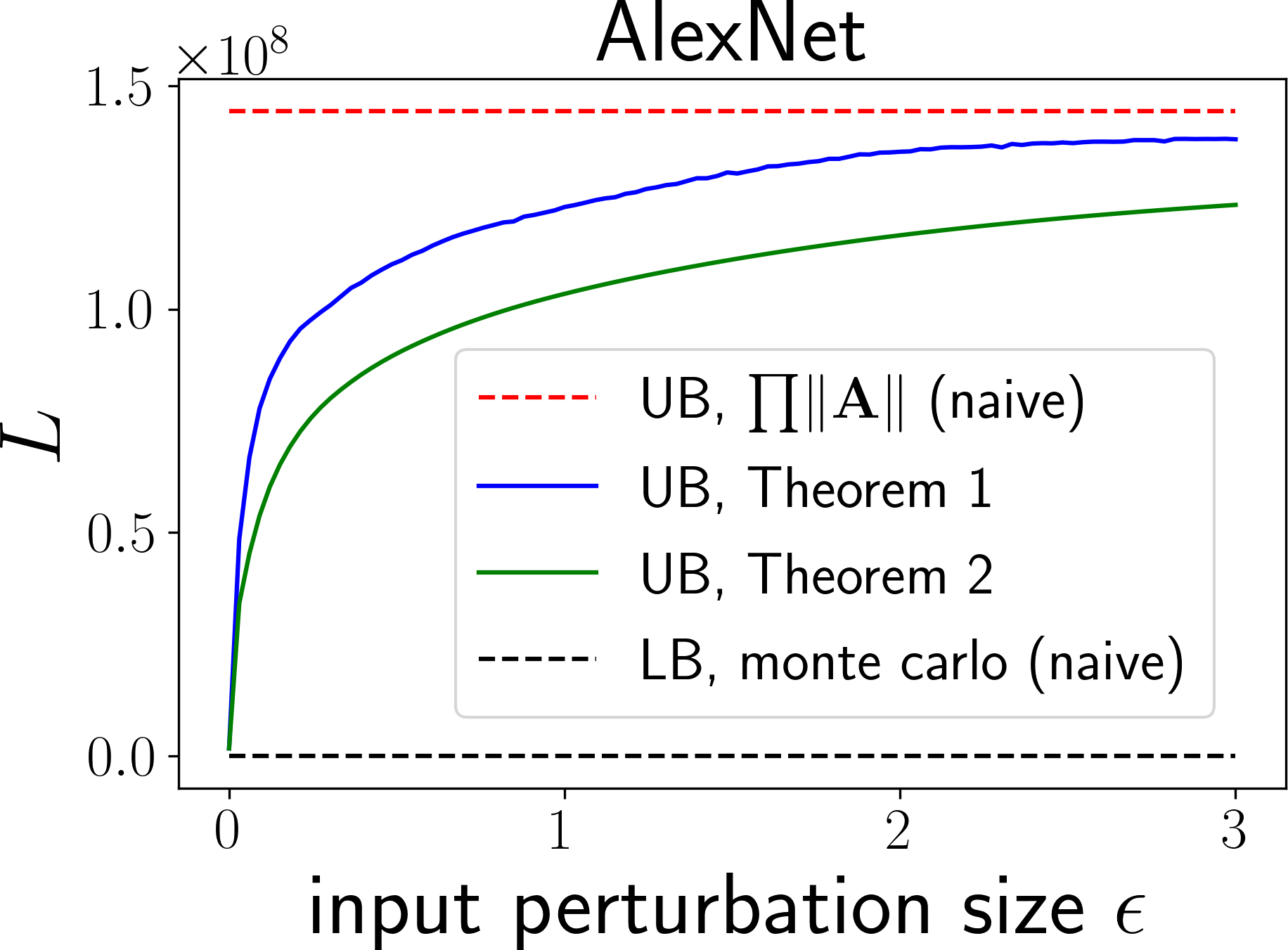}
\caption{Upper bounds (UB) and lower bounds (LB) on the local Lipschitz constants of MNIST, CIFAR-10, and AlexNet networks for various perturbation sizes. Note that these results are computed with respect to particular nominal input images.}
\label{fig:full_network}
\end{figure}

\begin{table}[ht]
\caption{Computation times for the local Lipschitz bounds.
Computations were performed on a desktop computer with an Nvidia GTX 1080 Ti card.}
\label{tab:computation_times}
\centering
\begin{tabular}{ c c c c }
\toprule
\textit{network} & \textbf{MNIST net} & \textbf{CIFAR-10 net} & \textbf{AlexNet} \\
\midrule
\textit{computation time} & 2 sec & 58 sec & 72 min \\
\bottomrule
\end{tabular}
\end{table}

The results show that our Lipschitz bounds increase with the size of the perturbation $\epsilon$, and approach the spectral norm of $\<A>$ for large $\epsilon$. For small perturbations, the bound is significantly lower than the naive bound.


\section{Conclusion}

We have presented the idea of computing upper bounds on the local Lipschitz constant of an affine-ReLU function, which represents one layer of a neural network. We described how these bounds can be combined to determine the Lipschitz constant of a full network, and also how they can be computed in an efficient way, even for large networks. The results show that our bounds are tighter than the naive bounds for a full network, especially for small perturbations.

We believe that the most important direction of future work regarding our method is to more effectively apply it to multiple layers. While we can combine our layer-specific bounds as we have in Fig. \ref{fig:full_network}, it almost certainly leads to an overly conservative bound, especially for deeper networks. We also suspect that our bounds become more conservative for larger perturbations.
However, calculating tight Lipschitz bounds for large neural networks is still an open and challenging problem, and we believe our results provide a useful step forward.

\section{Broader Impact}

We classify this work as basic mathematical analysis that applies to functions commonly used in neural networks. We believe this work could benefit those who are interested in developing more robust algorithms for safety-critical or other applications. It does not seem to us that this research puts anyone at a disadvantage. Additionally, since our bounds are provable, our method should not fail unless it is implemented incorrectly. Finally, we do not believe our method leverages any biases in data.

\begin{ack}
This work was supported by ONR grant N000141712623.
\end{ack}

\section*{References}

\medskip

\small

\begingroup
\renewcommand{\section}[2]{}%
\bibliographystyle{plain}
\bibliography{root.bib}

\begin{thebibliography}{10}

\bibitem{Bartlett}
Peter~L Bartlett, Dylan~J Foster, and Matus~J Telgarsky.
\newblock Spectrally-normalized margin bounds for neural networks.
\newblock In I.~Guyon, U.~V. Luxburg, S.~Bengio, H.~Wallach, R.~Fergus,
  S.~Vishwanathan, and R.~Garnett, editors, {\em Advances in Neural Information
  Processing Systems 30}, pages 6240--6249. Curran Associates, Inc., 2017.

\bibitem{Dittmer}
S{\"{o}}ren Dittmer, Emily~J. King, and Peter Maass.
\newblock Singular values for {ReLU} layers.
\newblock {\em CoRR}, abs/1812.02566, 2018.

\bibitem{Fazlyab}
Mahyar Fazlyab, Alexander Robey, Hamed Hassani, Manfred Morari, and George
  Pappas.
\newblock Efficient and accurate estimation of {L}ipschitz constants for deep
  neural networks.
\newblock In H.~Wallach, H.~Larochelle, A.~Beygelzimer, F.~d\textquotesingle
  Alch\'{e}-Buc, E.~Fox, and R.~Garnett, editors, {\em Advances in Neural
  Information Processing Systems 32}, pages 11427--11438. Curran Associates,
  Inc., 2019.

\bibitem{Goodfellow2014}
Ian~J. Goodfellow, Jonathon Shlens, and Christian Szegedy.
\newblock Explaining and harnessing adversarial examples, 2014.

\bibitem{Gouk}
Henry Gouk, Eibe Frank, Bernhard Pfahringer, and Michael Cree.
\newblock Regularisation of neural networks by enforcing {L}ipschitz
  continuity, 2018.

\bibitem{Jordan}
Matt Jordan and Alexandros~G. Dimakis.
\newblock Exactly computing the local {L}ipschitz constant of {ReLU} networks,
  2020.

\bibitem{Krizhevsky}
Alex Krizhevsky.
\newblock One weird trick for parallelizing convolutional neural networks.
\newblock {\em CoRR}, abs/1404.5997, 2014.

\bibitem{Latorre}
Fabian Latorre, Paul Rolland, and Volkan Cevher.
\newblock Lipschitz constant estimation of neural networks via sparse
  polynomial optimization.
\newblock In {\em International Conference on Learning Representations}, 2020.

\bibitem{Novak}
Roman Novak, Yasaman Bahri, Daniel~A. Abolafia, Jeffrey Pennington, and Jascha
  Sohl-Dickstein.
\newblock Sensitivity and generalization in neural networks: an empirical
  study.
\newblock In {\em International Conference on Learning Representations}, 2018.

\bibitem{Peck}
Jonathan Peck, Joris Roels, Bart Goossens, and Yvan Saeys.
\newblock Lower bounds on the robustness to adversarial perturbations.
\newblock In I.~Guyon, U.~V. Luxburg, S.~Bengio, H.~Wallach, R.~Fergus,
  S.~Vishwanathan, and R.~Garnett, editors, {\em Advances in Neural Information
  Processing Systems 30}, pages 804--813. Curran Associates, Inc., 2017.

\bibitem{Scaman}
Kevin Scaman and Aladin Virmaux.
\newblock Lipschitz regularity of deep neural networks: analysis and efficient
  estimation.
\newblock In S.~Bengio, H.~Wallach, H.~Larochelle, K.~Grauman, N.~Cesa-Bianchi,
  and R.~Garnett, editors, {\em Advances in Neural Information Processing
  Systems 31}, pages 3835--3844. Curran Associates, Inc., 2018.

\bibitem{Sokolic}
J.~{Sokolić}, R.~{Giryes}, G.~{Sapiro}, and M.~R.~D. {Rodrigues}.
\newblock Robust large margin deep neural networks.
\newblock {\em IEEE Transactions on Signal Processing}, 65(16):4265--4280, Aug
  2017.

\bibitem{Szegedy}
Christian Szegedy, Wojciech Zaremba, Ilya Sutskever, Joan Bruna, Dumitru Erhan,
  Ian Goodfellow, and Rob Fergus.
\newblock Intriguing properties of neural networks, 2013.

\bibitem{Terjek}
Dávid Terjék.
\newblock Adversarial {L}ipschitz regularization.
\newblock In {\em International Conference on Learning Representations}, 2020.

\bibitem{Tsuzuku}
Yusuke Tsuzuku, Issei Sato, and Masashi Sugiyama.
\newblock Lipschitz-margin training: Scalable certification of perturbation
  invariance for deep neural networks.
\newblock In S.~Bengio, H.~Wallach, H.~Larochelle, K.~Grauman, N.~Cesa-Bianchi,
  and R.~Garnett, editors, {\em Advances in Neural Information Processing
  Systems 31}, pages 6541--6550. Curran Associates, Inc., 2018.

\bibitem{Weng}
Tsui-Wei Weng, Huan Zhang, Pin-Yu Chen, Jinfeng Yi, Dong Su, Yupeng Gao,
  Cho-Jui Hsieh, and Luca Daniel.
\newblock Evaluating the robustness of neural networks: An extreme value theory
  approach.
\newblock In {\em International Conference on Learning Representations}, 2018.

\bibitem{Zou}
D.~{Zou}, R.~{Balan}, and M.~{Singh}.
\newblock On {L}ipschitz bounds of general convolutional neural networks.
\newblock {\em IEEE Transactions on Information Theory}, 66(3):1738--1759,
  2020.

\end{thebibliography}
\endgroup

\newpage


\appendix

\section{Appendix}

\subsection{Proofs} \label{sec:appendix_proofs}

\begin{customlem}{1} \label{lem:basic_bound}
Consider the affine function $\<A> \<x> + \<b>$, its domain $\mathcal{X}$, and the piecewise representation of the affine-ReLU function in \textup{(4)}. We have the following upper bound on the affine-ReLU function's local Lipschitz constant: $L( \<x>_0, \mathcal{X} ) \leq \max_{\<x> \in \mathcal{X}} \sum_{i=1}^p \Delta \alpha_i \norm{\<R>_i \<A>}$.
\end{customlem}
\begin{proof}
We can start with \textup{(7)} and plug in \textup{(4)}:
\begin{align*}
L \left( \mathcal{X}, \<x>_0 \right)
&= \max_{\<x> \in \mathcal{X}} \frac{\norm{\<relu>(\<A> \<x> + \<b>) - \<relu>(\<b>)}}{\norm{\<x>}} \\
&= \max_{\<x> \in \mathcal{X}} \frac{\norm{( \<R>_{\<b>} \<b> + \sum_{i=1}^p \Delta \alpha_i \<R>_i \<A> \<x> ) - \<R>_{\<b>} \<b>}}{\norm{\<x>}} \\
&= \max_{\<x> \in \mathcal{X}} \frac{\norm{\sum_{i=1}^p \Delta \alpha_i \<R>_i \<A> \<x>}}{\norm{\<x>}} \\
&\leq \max_{\<x> \in \mathcal{X}} \frac{\sum_{i=1}^p \Delta \alpha_i \norm{\<R>_i \<A> \<x>}}{\norm{\<x>}} \\
&\leq \max_{\<x> \in \mathcal{X}} \frac{\sum_{i=1}^p \Delta \alpha_i \norm{\<R>_i \<A>} \norm{\<x>}}{\norm{\<x>}} \\
&= \max_{\<x> \in \mathcal{X}} \sum_{i=1}^p \Delta \alpha_i \norm{\<R>_i \<A>} .
\end{align*}
\end{proof}

\hrulefill

\begin{customprp}{1} \label{prp:naive_upper_bound}
Consider the affine function $\<A> \<x> + \<b>$ and its domain $\mathcal{X}$. The spectral norm of $\<A>$ is an upper bound on the affine-ReLU function's local Lipschitz constant: $L( \<x>_0, \mathcal{X} ) \leq \norm{\<A>}$.
\end{customprp}
\begin{proof}
Consider the inequality from Lemma \textup{1}: $L( \<x>_0, \mathcal{X} ) \leq \max_{\<x> \in \mathcal{X}} \sum_{i=1}^p \Delta \alpha_i \norm{\<R>_i \<A>}$. Note that the ReLU matrix $\<R>_i$ is a diagonal matrix of 0's and 1's, so for any $\<v> \in \mathbb{R}^n$, the non-zero elements of $\<R>_i \<A> \<v>$ will be a subset of the non-zero elements of $\<A> \<v>$. Therefore, $\norm{\<A>} \geq \norm{\<R>_i \<A>}$ for all $\<R>_i$ and all $\<x> \in \mathcal{X}$, and we can rearrange the inequality from Lemma \textup{1} as follows:
\begin{align*}
L( \<x>_0, \mathcal{X} ) &\leq \max_{\<x> \in \mathcal{X}} \sum_{i=1}^p \Delta \alpha_i \norm{\<R>_i \<A>} \\
&\leq \max_{\<x> \in \mathcal{X}} \sum_{i=1}^p \Delta \alpha_i \norm{\<A>} \\
&= \norm{\<A>} .
\end{align*}
Where in the last step we have used the fact that $\sum_{i=1}^p \Delta \alpha_i = 1$.
\end{proof}

\hrulefill

\begin{customlem}{2}
Consider a bounding region $\mathcal{H}$ and its bounding ReLU matrix $\overline{\<R>}$. For any two points $\<y>_a, \<y>_b \in \mathcal{H}$, the following inequality holds: $\norm{\overline{\<R>} (\<y>_b - \<y>_a)} \geq \norm{\<R>_{\<y>_b} \<y>_b - \<R>_{\<y>_a} \<y>_a}$.
\label{lem:bounding_region_vectors}
\end{customlem}
\begin{proof}
Since the $\<R>$ matrices are diagonal, we can consider this problem on an element-by-element basis, and show that each element of $\overline{\<R>} (\<y>_b - \<y>_a)$ is greater in magnitude than its counterpart in $\<R>_{\<y>_b} \<y>_b - \<R>_{\<y>_a} \<y>_a$. For a given $i$, let $y_a$ and $y_b$ denote the $i^{th}$ entry of $\<y>_a$ and $\<y>_b$, respectively, and let $R_a$, $R_b$, and $\overline{R}$ denote the $(i,i)^{th}$ entries of $\<R>_{\<y>_a}$, $\<R>_{\<y>_b}$, and $\overline{\<R>}$, respectively. We can write the $i^{th}$ element of $\overline{\<R>} (\<y>_b - \<y>_a)$ as $\overline{R}(y_b-y_a)$ and the corresponding element of $\<R>_{\<y>_b} \<y>_b - \<R>_{\<y>_a} \<y>_a$ as $R_b y_b - R_a y_a$.

Since $R_{y_a}=1$ or $R_{y_a}=1$ imply $\overline{R}=1$, there are five possible cases we have to consider, which are shown in the table below.
\begin{center}
\begin{tabular}{ c c c c c c }
\toprule
& $R_a$ & $R_b$ & $\overline{R}$ & $\abs{\overline{R} (y_b - y_a)}$ & $\abs{R_b y_b - R_a y_a}$ \\ 
\midrule
\textit{case 1} & 0 & 0 & 0 & 0 & 0 \\
\textit{case 2} & 0 & 0 & 1 & $\abs{y_b - y_a}$ & 0 \\
\textit{case 3} & 1 & 0 & 1 & $\abs{y_b - y_a}$ & $\abs{y_a}$ \\
\textit{case 4} & 0 & 1 & 1 & $\abs{y_b - y_a}$ & $\abs{y_b}$ \\
\textit{case 5} & 1 & 1 & 1 & $\abs{y_b - y_a}$ & $\abs{y_b -y_a}$ \\
\bottomrule
\end{tabular}
\end{center}
For cases 1, 2, and 5 it is clear that $\abs{\overline{R}(y_b - y_a)} \geq \abs{R_b y_b - R_a y_a}$. For case 3, we note that if $R_a=1$ and $R_b=0$, then $y_a \geq 0$ and $y_b \leq 0$, which means that $y_b - y_a$ is a non-positive number that has magnitude equal to or less than the magnitude of $y_a$. This implies that $\abs{\overline{R}(y_b - y_a)} \geq \abs{R_b y_b - R_a y_a}$ for case 3. Similar logic can be applied to case 4, except in this case $y_b - y_a$ is a non-negative number that has magnitude equal to or greater than the magnitude of $y_b$. This implies that $\abs{\overline{R}(y_b - y_a)} \geq \abs{R_b y_b - R_a y_a}$ for case 4.

We showed that for all $i$, each element of $\overline{\<R>} (\<y>_b - \<y>_a)$ is greater in magnitude than the corresponding element in $\<R>_{\<y>_b} \<y>_b - \<R>_{\<y>_a} \<y>_a$, i.e. $\abs{\overline{R} (y_b - y_a)} \geq \abs{R_b y_b - R_a y_a}$. This implies $\norm{\overline{\<R>} (\<y>_b - \<y>_a)} \geq \norm{\<R>_{\<y>_b} \<y>_b - \<R>_{\<y>_a} \<y>_a}$.
\end{proof}

\hrulefill

\begin{customthm}{2} \label{thm:tighter_bounds}
Consider the affine function $\<A> \<x> + \<b>$ and its domain $\mathcal{X}$. Consider the nested bounding regions $\beta_i \mathcal{H}$, their scale factors $\Delta \beta_i$ and their bounding ReLU matrices $\overline{\<R>}_i$ as described in Section 4.2. The following is an upper bound on the affine-ReLU function's local Lipschitz constant: $L( \<x>_0, \mathcal{X} ) \leq \sum_{i=1}^q \Delta \beta_i \norm{\overline{\<R>}_i \<A>}$.
\end{customthm}
\begin{proof}
First, define the affine transformation of $\beta_i \<x>$ to be $\<y>_i = \beta_i \<A> \<x> + \<b>$. We can write the function $\<relu>(\<A> \<x> + \<b>)$ as the sum of the differences of the function taken across each segment. For the segment from $i{-}1$ to $i$, the difference in the affine-ReLU function is $\<R>_{\<y>_i} \<y>_i - \<R>_{\<y>_{i-1}} \<y>_{i-1}$. So we can write the total function as
\begin{align*}
\<relu>(\<A> \<x> + \<b>)
&= (\<R>_{\<b>} - \<0>) + \sum_{i=1}^q \left( \<R>_i \<y>_i - \<R>_{i-1} \<y>_{i-1} \right) \\
&= \<R>_{\<b>} + \sum_{i=1}^q \left( \<R>_i \<y>_i - \<R>_{i-1} \<y>_{i-1} \right) .
\end{align*}
Plugging the equation above into \textup{(7)}, we can write the local Lipschitz constant as
\begin{align*}
L( \<x>_0, \mathcal{X} ) 
&= \max_{\<x> \in \mathcal{X}} \frac{\norm{\<relu>(\<A> \<x> + \<b>) - \<relu>(\<b>) }}{\norm{\<x>}} \\
&= \max_{\<x> \in \mathcal{X}} \frac{\norm{\<R>_{\<b>} \<b> + \sum_{i=1}^q ( \<R>_{\<y>_i} \<y>_i - \<R>_{\<y>_{i-1}} \<y>_{i-1} ) - \<R>_{\<b>} \<b>}}{\norm{\<x>}} \\
&= \max_{\<x> \in \mathcal{X}} \frac{\norm{\sum_{i=1}^q ( \<R>_{\<y>_i} \<y>_i - \<R>_{\<y>_{i-1}} \<y>_{i-1} )}}{\norm{\<x>}} \\
&\leq \max_{\<x> \in \mathcal{X}} \frac{\sum_{i=1}^q \norm{ ( \<R>_{\<y>_i} \<y>_i - \<R>_{\<y>_{i-1}} \<y>_{i-1} )}}{\norm{\<x>}} .
\end{align*}

Recalling that $\<y>_{i-1} = \beta_{i-1} \<A> \<x> + \<b>$ and $\<y>_i = \beta_i \<A> \<x> + \<b>$, and that $\beta_{i-1} < \beta_i$, we know that $\<y>_{i-1},\<y>_i \in \beta_i \mathcal{H}$. So, using Lemma 2 we can rearrange the equation above as follows:
\begin{align*}
L( \<x>_0, \mathcal{X} )
&\leq \max_{\<x> \in \mathcal{X}} \frac{\sum_{i=1}^q \norm{\overline{\<R>}_i (\<y>_i - \<y>_{i-1})}}{\norm{\<x>}} \\
&= \max_{\<x> \in \mathcal{X}} \frac{\sum_{i=1}^q \norm{\overline{\<R>}_i \left( \beta_i \<A> \<x>  + \<b> - (\beta_{i-1} \<A> \<x> + \<b>) \right)}}{\norm{\<x>}} \\
&= \max_{\<x> \in \mathcal{X}} \frac{\sum_{i=1}^q \Delta \beta_i \norm{\overline{\<R>}_i \<A> \<x>}}{\norm{\<x>}} \\
&\leq \frac{\sum_{i=1}^q \Delta \beta_i \norm{\overline{\<R>}_i \<A>} \norm{\<x>}}{\norm{\<x>}} \\
&= \sum_{i=1}^q \Delta \beta_i \norm{\overline{\<R>}_i \<A>} .
\end{align*}
\end{proof}

\subsection{Bounding region determination for various domains} \label{sec:appendix_bounding_region}

We have presented the idea of considering an affine function $\<A> \<x> + \<b>$ with domain $\mathcal{X}$ and range $\mathcal{Y}$. We are interested in determining the axis-aligned bounding region $\mathcal{H}$ around $\mathcal{Y}$. We will now show how to tightly compute this region for various domains. Recall from Section 4.1 that we denote the upper bounding vertex of the region as $\overline{\<y>}$, and define $\boldsymbol{l} \coloneqq \overline{\<y>} - \<b>$ (the distance from $\<b>$ to $\overline{\<y>}$). We let $\<a>_1^T,...,\<a>_m^T \in \mathbb{R}^n$ denote the rows of $\<A>$, and $a_{ij}$ denote the $j^{th}$ element of $\<a>_i$. Similarly, we let $\overline{y}_i$ and $l_i$ denote the $i^{th}$ elements of $\overline{\<y>}$ and $\boldsymbol{l}$, respectively. Denoting $\<e>_i \in \mathbb{R}^m$ as the $i^{th}$ standard basis vector, we can write this problem as
\begin{align*}
\overline{y}_i &= \max_{\<x> \in \mathcal{X}} ~ \<e>_i^T (\<A> \<x> + \<b>) \\
&= \max_{\<x> \in \mathcal{X}} ~ \<a>_i^T \<x> + b_i .
\end{align*}
We can subtract out the bias term in the maximization above since it is constant. By doing so, our maximization will find $l_i$ instead of $\overline{y}_i$. We also define $\<x>^{*,i} \in \mathbb{R}^n$ as the maximizing vector:
\begin{align*}
l_i &= \max_{\<x> \in \mathcal{X}} ~ \<a>_i^T \<x> \\
\<x>^{*,i} &= \argmax_{\<x> \in \mathcal{X}} ~ \<a>_i^T \<x> .
\end{align*}

\subsubsection*{Domain 1: $\mathcal{X} = \{ \<x> ~~ | ~~ \norm{\<x>}_1 \leq \epsilon \}$}

Let $x_j$ denote the $j^{th}$ element of $\<x>$. In this case we have$\sum_j \abs{x_j} \leq \epsilon$. The quantity $\<a>_i^T \<x>$ will be maximized when the element of $\<a>_i$ with the largest magnitude is given all of the weight. In other words,
\begin{align*}
j^* &\coloneqq \argmax_j \abs{a_{ij}} \\
x_j^{*,i} &= 
\begin{cases}
\epsilon \cdot \sgn(a_{ij^*}), &j = j^* \\
0, &\text{otherwise}
\end{cases}
\\
l_i &= \abs{a_{ij^*}} .
\end{align*}

\subsubsection*{Domain 2: $\mathcal{X} = \{ \<x> ~~ | ~~ \norm{\<x>}_2 \leq \epsilon \}$}
In this case, we are maximizing over all vectors $\<x>$ with length less than or equal to $\epsilon$. So, the maximum will occur when $\<x>$ points in the direction of $\<a>_i$ and has the largest possible magnitude (i.e. $\epsilon$). Note that intuitively this can be thought of as maximizing the dot product of an $\epsilon$-sized $n$-sphere with $\<a>_i$. We have
\begin{align*}
\<x>^{*,i} &= \epsilon \frac{\<a>_i}{\norm{\<a>_i}_2} \\
l_i &= \<a>_i^T \left( \epsilon \frac{\<a>_i}{\norm{\<a>_i}_2} \right) \\
&= \epsilon \norm{\<a>_i}_2 .
\end{align*}
Note that we have assumed that $\<a>_i \neq \<0>$. If $\<a>_i = \<0>$, then it is obvious that any $\<x> \in \mathcal{X}$ will produce a maximum value of $l_i = 0$, and the last equation still holds.

\subsubsection*{Domain 3: $\mathcal{X} = \{ \<x> ~~ | ~~ \norm{\<x>}_{\infty} \leq \epsilon \}$}
In this case, $\<x> \in [-\epsilon,\epsilon]^n$. So, the quantity $\<a>_i^T \<x>$ is maximized when $x_j = \epsilon$ for positive $a_{ij}$, and $x_j = -\epsilon$ for negative $a_{ij}$. So, we have
\begin{align*}
x_j^{*,i} &= 
\begin{cases}
-\epsilon, &a_{ij} < 0 \\
\epsilon, &a_{ij} > 0 \\
0, &a_{ij} = 0
\end{cases}
\\
l_i &= \sum_j \abs{a_{ij}} .
\end{align*}
Note that when $a_{ij}=0$, the value of $x_j^{*,i}$ does not matter as long as it is in the range $[-\epsilon,\epsilon]$. But we define it as zero so that when we consider non-negative domains in the next section, we can simply replace the matrix $\<A>$ with its positive part $\<A>^+$.

\subsubsection*{Non-negative Domains: $\mathcal{X} = \{ \<x> ~~ | ~~ \norm{\<x>}_q \leq \epsilon, ~~ \<x> \geq \<0> \}$}

In many cases, due to the affine-ReLU function being preceded by a ReLU, the domain $\mathcal{X}$ will consist of vectors with non-negative entries. In these cases, the bounding region often becomes smaller (i.e. some or all elements of $\overline{\<y>}$ are smaller). For the 1, 2, or $\infty$-norms above, we can first decompose the $\<A>$ matrix into its positive and negative parts: $\<A> = \<A>^+ - \<A>^-$. Since placing emphasis on a negative element of $\<a>_i$ will always be suboptimal, we can apply the same analysis in Domains 1, 2, and 3, except replace $\<A>$ with $\<A>^+$.

\subsubsection*{Efficient computation}

Note that for large convolutional layers, it is too expensive to represent the entire matrix $\<A>$. In these cases, we can obtain the $i^{th}$ row of $\<A>$ using the transposed convolution operator. More specifically, we create a transposed convolution function with no bias, based on the original convolution function. Then, by noting that if we consider a standard basis vector $\<e>_i \in \mathbb{R}^m$, the $i^{th}$ column of $\<A>^T$ (and $i^{th}$ row of $\<A>$) is given by $\<A>^T \<e>_i$. Therefore, by plugging in the $i^{th}$ standard basis vector to the transposed convolution function, we can obtain $\<a>_i$, the $i^{th}$ row of $\<A>$. Note that a vector $\<e>_i$ must first be reshaped into the proper input dimension before plugging it into the transposed convolution function. Furthermore, to reduce computation time in practice, instead of plugging in each standard basis vector $\<e>_i$ one at a time, we plug in a batch of different standard basis vectors to obtain multiple rows of $\<A>$.

\subsection{Neural network architectures} \label{sec:appendix_architectures}

We used three neural networks in this paper. The first network is based on the MNIST dataset and we refer to it as ``MNIST net''. We constructed MNIST net ourselves and trained it to 99\% top-1 test accuracy in 100 epochs. The second network is based on the CIFAR-10 dataset and we refer to it as ``CIFAR-10 net''. We constructed CIFAR-10 net ourselves and trained it to 84\% top-1 test accuracy in 500 epochs. The third network is the pre-trained implementation of AlexNet from Pytorch's \texttt{torchvision} package. The following table shows the architectures of MNIST net and CIFAR-10 net.

\begin{table}[ht]
\caption{Networks we constructed for this paper. Convolution layers are denoted as conv\{\textit{kernel size}\}-\{\textit{output channels}\}. Max pooling layers are denoted as maxpool\{\textit{kernel size}\}, and fully-connected layers are denoted as FC-\{\textit{output features}\}. All convolution layers are followed by a ReLU and have a stride of 1. All fully-connected layers are followed by a ReLU unless it is the last layer.}
\centering
\begin{tabular}{ c c }
\toprule
\textbf{MNIST net} & \textbf{CIFAR-10 net} \\
\midrule
conv5-6 & conv3-32 \\
maxpool2 & conv3-32 \\
conv5-16 & maxpool2 \\
maxpool2 & dropout \\
FC-120 & conv3-64 \\
FC-84 & conv3-64 \\
FC-10 & maxpool2 \\
 & dropout \\
 & FC-512 \\
 & dropout \\
 & FC-10 \\
\bottomrule
\end{tabular}
\label{tab:network_architectures}
\end{table}

\end{document}